\documentclass[letterpaper,twocolumn,10pt]{article}
\usepackage{usenix2019_v3}

\usepackage[skip=2pt,font=small]{caption}

\usepackage{cite}
\usepackage{amsmath,amssymb,amsfonts}
\usepackage{algorithm,algorithmicx,algpseudocode}
\usepackage{multirow}
\algnewcommand{\LeftCommenta}[1]{\Statex \hspace{1.3em} \(\triangleright\) #1}
\algnewcommand{\LeftCommentb}[1]{\Statex \hspace{2.7em} \(\triangleright\) #1}
\usepackage{graphicx}
\usepackage{textcomp}
\usepackage{xcolor}
\usepackage{booktabs}
\usepackage{hyperref}
\usepackage{bm}

\usepackage{diagbox}
\usepackage{amsthm}
\newtheorem{theorem}{Theorem}
\newtheorem{lemma}{Lemma}
\usepackage{array}
\usepackage{xspace}
\newcommand{\framework}{PatchGuard\xspace}


\begin{document}

\date{}

\title{PatchGuard: A Provably Robust Defense against Adversarial Patches via Small Receptive Fields and Masking}

\author{
{\rm Chong Xiang}\\
Princeton University
\and
{\rm Arjun Nitin Bhagoji}\\
University of Chicago
\and
{\rm Vikash Sehwag}\\
Princeton University
\and 
{\rm Prateek Mittal}\\
Princeton University
}

\maketitle

\begin{abstract}
Localized adversarial patches aim to induce misclassification in machine learning models by arbitrarily modifying pixels within a restricted region of an image. Such attacks can be realized in the physical world by attaching the adversarial patch to the object to be misclassified, and defending against such attacks is an unsolved/open problem. In this paper, we propose a general defense framework called \framework that can achieve high provable robustness while maintaining high clean accuracy against localized adversarial patches. The cornerstone of \framework involves the use of CNNs with small receptive fields to impose a bound on the number of features corrupted by an adversarial patch. Given a bounded number of corrupted features, the problem of designing an adversarial patch defense reduces to that of designing a secure feature aggregation mechanism. Towards this end, we present our \textit{robust masking} defense that robustly detects and masks corrupted features to recover the correct prediction. Notably, we can prove the robustness of our defense against any adversary within our threat model. Our extensive evaluation on ImageNet, ImageNette (a 10-class subset of ImageNet), and CIFAR-10 datasets demonstrates that our defense achieves state-of-the-art performance in terms of both provable robust accuracy and clean accuracy.\footnote{A shorter version of this paper is published at USENIX Security Symposium 2021. Our code is available at \texttt{\url{https://github.com/inspire-group/PatchGuard}} for the purpose of reproducibility.}

\end{abstract}
\section{Introduction}\label{sec-introduction}

Machine learning models are vulnerable to evasion attacks, where an adversary introduces a small perturbation to a test example for inducing model misclassification~\cite{szegedy2013intriguing,goodfellow2014explaining}. Many prior attacks and defenses focus on the classic setting of adversarial examples that have a small $L_p$ distance to the benign example~\cite{szegedy2013intriguing,goodfellow2014explaining,papernot2016limitations,carlini2017towards,madry2017towards,papernot2016distillation,xu2017feature,meng2017magnet,metzen2017detecting,carlini2017adversarial,athalye2018obfuscated,tramer2020adaptive}. However, in the physical world, the classic $L_p$ setting may require global perturbations to an object, which is not always practical. In this paper, we focus on the threat of \emph{localized adversarial patches}, in which the adversary can arbitrarily modify pixels within a small restricted area such that the perturbation can be realized by attaching an adversarial patch to the victim object. Several effective patch attacks have been shown: 1) Brown et al.~\cite{brown2017adversarial} generate physical adversarial patches that can force model predictions to be a target class of the attacker's choice; 2) Karmon et al.~\cite{karmon2018lavan} propose the LaVAN attack in the digital domain; 3) Eykholt et al.~\cite{evtimov2017robust} demonstrate a robust physical-world attack that attaches small stickers to a stop sign for fooling traffic sign recognition. 

The success of practical localized adversarial patches has inspired several defenses. Digital Watermark (DW)~\cite{hayes2018visible} aims to detect and remove the adversarial patch while Local Gradient Smoothing (LGS)~\cite{naseer2019local} proposes smoothing the suspicious region of pixels to neutralize the adversarial patch. However, these empirical defenses are heuristic approaches and lack robustness against a strong adaptive attacker~\cite{chiang2020certified}. This has led to the development of several {certifiably robust} defenses. Chiang et al.~\cite{chiang2020certified} propose the first certified defense against adversarial patches via Interval Bound Propagation (IBP)~\cite{gowal2018effectiveness,mirman2018differentiable}. Zhang et al.~\cite{zhang2020clipped} use a clipped BagNet (CBN) to achieve provable robustness while Levine et al.~\cite{levine2020randomized} propose De-randomized Smoothing (DS) to further improve provable robustness. These works have taken important steps towards provably robust models. However, their performance is still limited in terms of provable robustness and standard classification accuracy (i.e., clean accuracy), leaving defenses against adversarial patches an unsolved/open problem.

\begin{figure*}[!th]
    \centering
    \includegraphics[width=\textwidth]{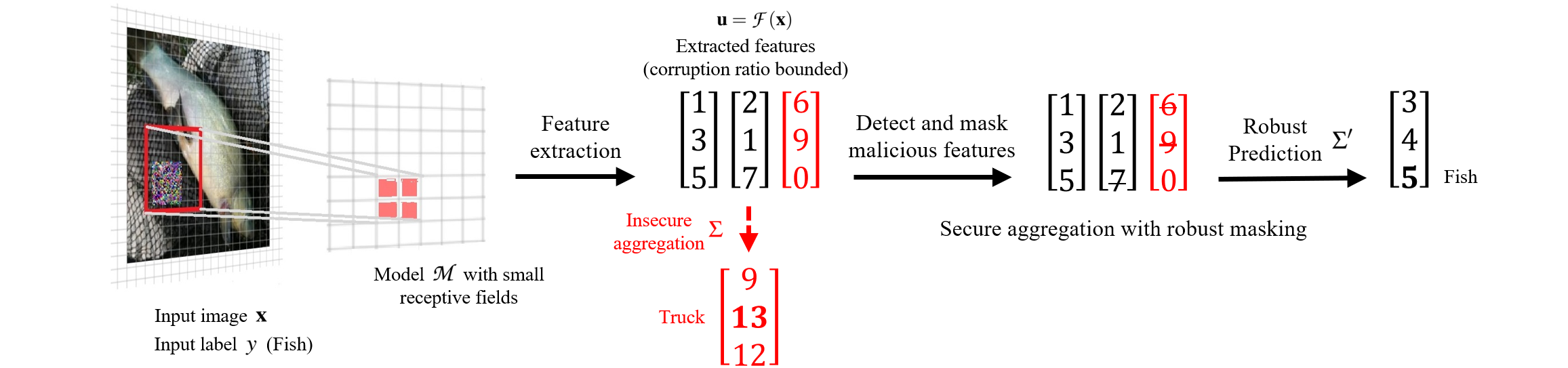}
    \caption{\small\textbf{Overview of defense}. The small receptive field bounds the number of corrupted features (one out of three vectors in this example). The one corrupted feature (red vector) in this example has an abnormally large element that dominates the insecure aggregation ($\Sigma$) but also leads to a distinct pattern from clean features. Our \emph{robust masking} aggregation detects and masks the corrupted feature, recovering the correct prediction from the remaining features. We note that \emph{robust masking} can have false positives (FP) and incorrectly mask benign features, but we show in Section~\ref{sec-evaluation} that our defense retains high clean accuracy and provable robust accuracy.}
    \label{fig-overview}
\end{figure*}

\subsection{Contributions}
In this paper, we propose a general defense framework called {\framework} that achieves \emph{substantial state-of-the-art provable robustness while maintaining high clean accuracy against localized adversarial patches}.

\textbf{Insight: Leverage CNNs with Small Receptive Fields.} The cornerstone of our defense framework involves the use of Convolutional Neural Networks (CNNs) with small receptive fields to impose a bound on the number of features that can be corrupted due to an adversarial patch. The receptive field of a CNN is the region of an input image that a particular feature is influenced by, and model prediction is based on the aggregation of features extracted from different regions of an image. An example of the receptive field is shown as the red box on the image in Figure~\ref{fig-overview}. Our case study in Section~\ref{sec-motivation} demonstrates that a large receptive field makes CNNs more vulnerable to adversarial patch attacks. For a model with a large receptive field of 483$\times$483 (ResNet-50~\cite{he2016deep}) on ImageNet images~\cite{deng2009imagenet}, a small patch is present in the receptive field of most extracted features and can thus easily change model prediction. A small receptive field, on the other hand, limits the number of corrupted features, and we use it as the fundamental building block of robust classifiers. We note that a small receptive field is not a barrier to achieving high clean accuracy. A ResNet-like architecture with a small 17$\times$17 receptive field can achieve an AlexNet-level accuracy for ImageNet top-5 classification~\cite{brendel2019approximating}. The potential robustness improvement, as well as the moderate accuracy drop, motivates the use of small receptive fields in \framework.

\textbf{Insight: Leveraging Secure Aggregation \& Robust Masking.} However, a small receptive field alone is not enough for robust prediction since conventional models use insecure feature aggregation mechanisms such as mean. The use of small receptive fields turns the problem of designing an adversarial patch defense into a secure aggregation problem, and we propose \emph{robust masking} as an effective instance of secure feature aggregation mechanism. Figure~\ref{fig-overview} provides an overview of our defense. The small receptive field ensures that only a small fraction of extracted features are corrupted due to an adversarial patch. The small number of corrupted features forces the adversary to create abnormally large feature values to dominate the final prediction, and \emph{robust masking} aims to detect and mask these abnormal features. Our empirical analysis  demonstrates that removing a small number of features of a clean image is unlikely to change model prediction. Therefore, robust masking recovers the correct prediction with high probability if all the corrupted features are masked.

\textbf{Provable Robustness.} Robust masking introduces a fundamental dilemma for the adversary: either to generate conspicuous malicious features that will be detected and masked by our defense or to do with stealthy but ineffective adversarial patches. In Section \ref{sec-provable}, we show that this dilemma leads to a proof of \emph{provable robustness} for our defense, providing the guarantee that the model can always recover correct predictions on certified images against any adversarial patch within the threat model. This is a stronger notion of robustness compared with defenses that only detect the adversarial attack~\cite{mccoyd2020minority,meng2017magnet,xu2017feature}. We also show that \framework subsumes several existing defenses~\cite{zhang2020clipped,levine2020randomized} (as shown in Section~\ref{sec-generalization-cbn-ds}), and outperforms them due to the use of \emph{robust masking}.

\textbf{State-of-the-art Performance.} We consider the strongest adversarial patch attacker, who can place the adversarial patch on any part of the image, including on top of salient objects. We evaluate our provably robust defense against any patch attacker on ImageNet~\cite{deng2009imagenet}, ImageNette~\cite{imagenette}, CIFAR-10~\cite{krizhevsky2009learning}, and shows that our defense achieves state-of-the-art performance in terms of provable robustness and clean accuracy compared to previous defenses~\cite{chiang2020certified,levine2020randomized,zhang2020clipped}.
Our main contributions can be summarized as follows:

\begin{enumerate}
    \item We demonstrate the use of a small receptive field as a fundamental building block for robustness and leverage it to develop our general defense framework called \framework. \framework is flexible and general as it is compatible with any CNN with small receptive fields and any secure aggregation mechanism. 
    \item  We present \emph{robust masking} as an instance of the secure aggregation mechanism that leads to provable robustness and recovers correct predictions for certified images against any attacker within the threat model. 
    \item We comprehensively evaluate our defense across ImageNet~\cite{deng2009imagenet}, ImageNette~\cite{imagenette}, CIFAR-10~\cite{krizhevsky2009learning} datasets, and demonstrate state-of-the-art provable robust accuracy and clean accuracy of our defense. 
\end{enumerate}

\section{Problem Formulation}\label{sec-formulation}
In this section, we first introduce the image classification model, followed by the adversarial patch attack and defense formulation. Finally, we present important terminology used in \framework. Table~\ref{tab-notation} provides a summary of our notation.

\subsection{Image Classification Model}\label{sec-model-formulation}
We focus on Fully Convolutional Neural Networks (FCNNs) such as ResNet~\cite{he2016deep}, which use convolutional layers for feature extraction and \textit{only one} additional fully-connected layer for the final classification. This structure is widely used in state-of-the-art image classification models~\cite{he2016deep,szegedy2015going,simonyan2014very,szegedy2017inception}.

We use $\mathcal{X} \subset [0,1]^{W\times H \times C}$ to denote the image space where each image has width $W$, height $H$, number of channels $C$, and the pixels are re-scaled to $[0,1]$. We take $\mathcal{Y}=\{0,1,\cdots,N-1\}$ as the label space, where the number of classes is $N$. 
We use $\mathcal{M}(\mathbf{x}):\mathcal{X} \rightarrow \mathcal{Y}$ to denote the model that takes an image $\mathbf{x}\in \mathcal{X}$ as input and predicts the class label $y\in\mathcal{Y}$. We let $\mathcal{F}(\mathbf{x}):\mathcal{X} \rightarrow \mathcal{U}$ be the feature extractor that outputs the feature tensor $\mathbf{u} \in \mathcal{U} \subset \mathbb{R}^{ W^\prime \times H^\prime \times C^\prime}$, where $W^\prime$, $H^\prime$, $C^\prime$ are the width, height, and number of channels in this feature map, respectively.

\subsection{Attack Formulation}\label{sec-attack}

\noindent \textbf{Attack objective.} We focus on evasion attacks against an image classification model. Given a deep learning model $\mathcal{M}$, an image $\mathbf{x}$, and its true class label $y$, the goal of the attacker is to find an image $\mathbf{x}^\prime \in \mathcal{A}(\mathbf{x}) \subset \mathcal{X}$ satisfying a constraint $\mathcal{A}$ such that $\mathcal{M}(\mathbf{x}^\prime) \neq y$. The constraint $\mathcal{A}$ is defined by the attacker's threat model, which we will describe below. We note that the attack objective of inducing misclassification into any wrong class is referred to as an \emph{untargeted attack}. In contrast, when the goal is to misclassify the image to a particular target class $y^\prime\neq y$, it is called a \textit{targeted attack}. The untargeted attack is easier to launch and thus more difficult to defend against. In this paper, we focus on defenses against the untargeted attack.

\noindent \textbf{Attacker capability.} The attacker can arbitrarily modify pixels within a restricted region, and this region can be anywhere on the image, even over the salient object. We assume that all manipulated pixels are within a contiguous region, and the defender has a conservative estimate (i.e., upper bound) of the region size. 
We note that this matches the strongest threat model used in the existing literature on certified defenses against adversarial patches \cite{chiang2020certified,levine2020randomized,zhang2020clipped}.\footnote{A high-performance provably robust defense against a single patch is currently an open/unsolved problem and is thus the focus of our threat model. We will discuss our defense extension for multiple patches in Appendix~\ref{apx-multiple-patch}.}
Formally, we use a binary \textit{pixel block} $\mathbf{p} \in P \subset \{0,1\}^{W\times H}$ to represent the restricted region, where the pixels within the region are set to $1$. Then, the constraint set $\mathcal{A}(\mathbf{x})$ can be expressed as $\{\mathbf{x}^\prime = (\mathbf{1}-\mathbf{p})\odot \mathbf{x} + \mathbf{p} \odot \mathbf{x}^{\prime\prime} | \mathbf{x},\mathbf{x}^\prime \in \mathcal{X}, \mathbf{x}^{\prime\prime}\in[0,1]^{W\times H \times C}, \mathbf{p} \in P\}$, where $\odot$ refers to the element-wise product operator, and $\mathbf{x}^{\prime\prime}$ is the content of the adversarial patch. In this paper, we primarily focus on the case where $\mathbf{p}$ represents one square region. Our defense can generalize to other shapes and we provide the discussion and experiment results in Appendix~\ref{apx-patch-shape}.

\subsection{Defense Formulation}

\noindent \textbf{Defense objective.} The goal of our defense is to design a defended model $\mathcal{D}$ such that $\mathcal{D}(\mathbf{x})=\mathcal{D}(\mathbf{x}^\prime)=y$ for any clean data point $(\mathbf{x},y)\in\mathcal{X}\times\mathcal{Y}$ and any adversarial example $\mathbf{x}^\prime \in \mathcal{A}(\mathbf{x})$, where $\mathcal{A}(\mathbf{x})$ is the adversarial constraint introduced in Section~\ref{sec-attack}. Note that we aim to \emph{recover the correct prediction}, which is harder than merely detecting an attack.

\noindent\textbf{Provable robustness.} Previous works~\cite{carlini2017adversarial,chiang2020certified,tramer2020adaptive} have shown that empirical defenses are usually vulnerable to an adaptive white-box attacker who has full knowledge of the defense algorithm, model architecture, and model weights; therefore, we design \framework as a provably robust defense~\cite{gowal2018effectiveness,mirman2018differentiable,cohen2019certified,chiang2020certified,levine2020randomized,zhang2020clipped} to provide the strongest robustness. \emph{The evaluation of provable defense is agnostic to attack algorithms and its result holds for any attack considered in the threat model}.

\begin{table}[t]
    \centering
    \caption{Table of notation}

   \resizebox{\linewidth}{!}{ \begin{tabular}{l|l}
    \toprule
    \textbf{Notation} & \textbf{Description} \\
    \midrule
    $\mathcal{X} \subset[0,1]^{W\times H \times C}$  & Image space\\
    $\mathcal{Y} = \{0,1,\cdots,N-1\}$ & Label space\\
    $\mathcal{U} \subset \mathbb{R}^{ W^\prime \times H^\prime \times C^\prime}$ & Feature space\\
      $\mathcal{M}(\mathbf{x}):\mathcal{X} \rightarrow \mathcal{Y}$  & Model predictor from $\mathbf{x}\in\mathcal{X}$\\
      $\mathcal{F}(\mathbf{x}):\mathcal{X} \rightarrow \mathcal{U}$ & Local feature extractor for all classes \\
      $\mathcal{F}(\mathbf{x},l):\mathcal{X}\times \mathcal{Y} \rightarrow \mathcal{U}$ & Local feature extractor for class $l$\\
      $P \subset \{0,1\}^{W\times H }$ & Set of binary pixel blocks in the image space\\
      $W \subset \{0,1\}^{W^\prime\times H^\prime }$ & Set of binary windows in the feature space\\
      \bottomrule
    \end{tabular}}
    \label{tab-notation}
\end{table}

\subsection{\framework Terminology}\label{sec-defense-notation}

\noindent\textbf{Local feature and its receptive field.} Recall that we use $\mathcal{F}$ to extract feature map as $\mathbf{u}\in \mathbb{R}^{ W^\prime \times H^\prime \times C^\prime}$. We refer to each $1\times1\times C^\prime$-dimensional feature in tensor $\mathbf{u}$ as a \emph{local feature} since it is only extracted from part of the input image as opposed to the entire image. We define the \textit{receptive field} of a local feature to be a subset of image pixels that the feature $\tilde{\mathbf{u}} \in \mathbb{R}^{1\times1\times C^\prime}$ is looking at, or affected by. Formally, if we represent the input image ${x}$ as \emph{a set of pixels}, the receptive field of a particular local feature $\tilde{\mathbf{u}}$ is a subset of pixels for which the gradient of $\tilde{\mathbf{u}}$ is non-zero, i.e., $\{{r}\in {x} | \nabla_{{r}} \tilde{\mathbf{u}} \neq \mathbf{0} \}$. For simplicity, we use the phrase ``receptive field of a CNN" to refer to ``receptive field of a particular feature of a CNN".

\noindent\textbf{Global feature and global logits.} When the local feature tensor $\mathbf{u}$ is the output of the last convolutional layer, conventional CNNs use an element-wise linear aggregation (e.g., mean) over all local features to obtain the \emph{global feature} in $\mathbb{R}^{C^\prime}$. The global feature will then go through the last fully-connected layer (i.e., classification layer) and yield the \emph{global logits} vector in $\mathbb{R}^{N}$ for the final prediction (top of Figure~\ref{fig-feature}).

\noindent \textbf{Local logits.} Similar to computing the global logits from the global feature, we can feed each local feature (in $\mathbb{R}^{1\times1\times C^\prime}$) to the fully-connected layer to get the \emph{local logits} (in $\mathbb{R}^{1\times1\times N}$). Each local logits vector is the classification output based on each {local feature}; thus, they share the same receptive field. Concatenating all $W^\prime \cdot H^\prime$ local logits vectors gives the local logits tensor, and applying the element-wise linear aggregation gives the same global logits (bottom of Figure~\ref{fig-feature}).

\noindent \textbf{Local confidence, local prediction, and class evidence.} Based on local logits, we can derive the concept of \emph{local confidence} and \emph{local prediction} tensor by feeding the {local logits} tensor to a softmax layer and an argmax layer, respectively. In the remainder of this paper, we \emph{specialize the concept of feature} by considering it to refer to either a logits tensor, a confidence tensor, or a prediction tensor. In this case, we have $C^\prime=N$. We also sometimes abuse the notation by letting $\mathcal{F}(\mathbf{x},l):\mathcal{X}\times \mathcal{Y} \rightarrow \mathbb{R}^{ W^\prime \times H^\prime}$ denote the slice of the feature corresponding to class $l$. We call the elements of $\mathcal{F}(\mathbf{x},l)$ the \emph{class evidence} for class $l$.

\begin{figure}[t]
    \centering
    \includegraphics[width=0.95\linewidth]{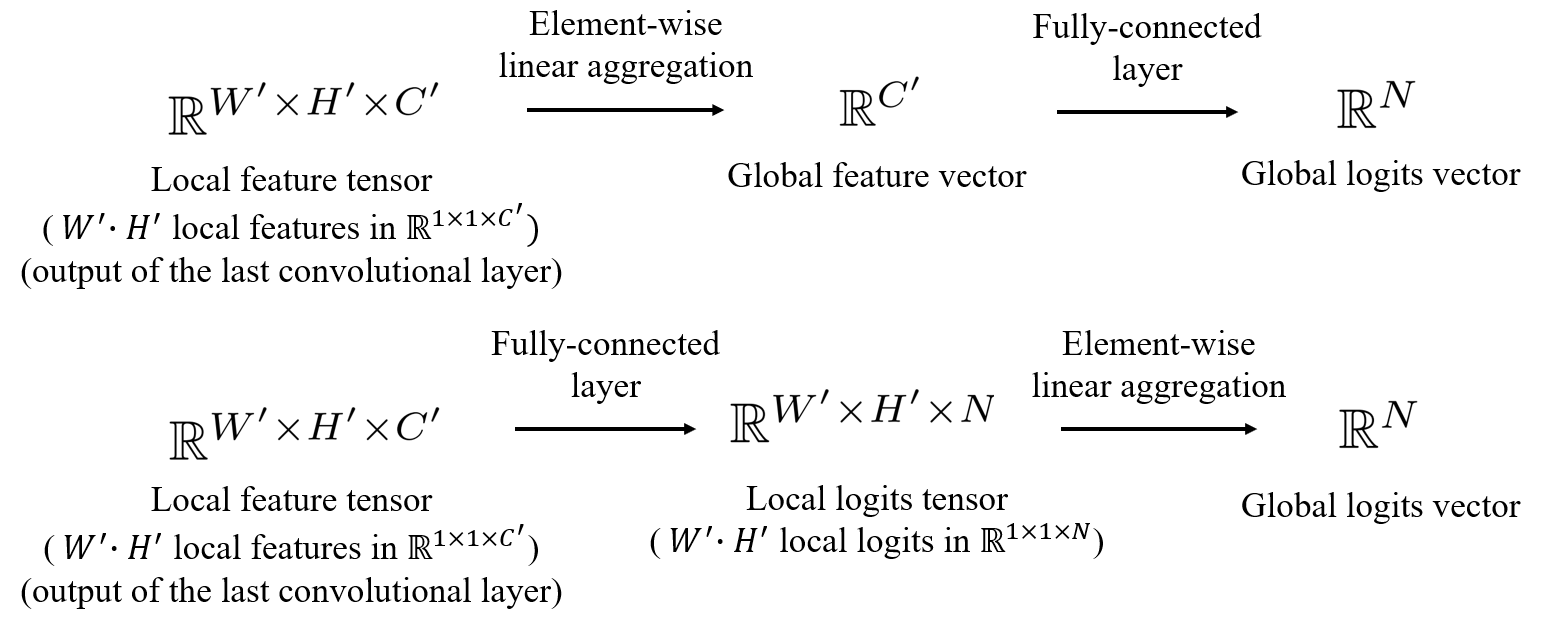}
    \caption{\small Two equivalent ways of computing the global logits vector (top: used in conventional CNNs; bottom: used in our defense).} 
    \label{fig-feature}
\end{figure}

\section{\framework}\label{sec-defense}

In this section, we first use an empirical case study to motivate the use of small receptive fields and secure feature aggregation (i.e., \emph{robust masking}).
Next, we will give an overview of our general \framework framework, followed by our use of networks with small receptive fields and details of our \emph{robust masking} based secure aggregation. The provable robustness of this defense will be demonstrated and analyzed in Section~\ref{sec-provable}.

\subsection{Why are adversarial patches effective?}\label{sec-motivation}

Previous work~\cite{brown2017adversarial,karmon2018lavan} on adversarial patches, surprisingly, shows that model prediction can be manipulated by patches that occupy a very small portion of input images. In this subsection, we provide a case study for ResNet-50~\cite{he2016deep} trained on ImageNet~\cite{deng2009imagenet}, ImageNette (a 10-class subset of ImageNet)~\cite{imagenette}, and CIFAR-10~\cite{krizhevsky2009learning} datasets and identify two critical reasons for the model vulnerability. These will then motivate the development and discussion of our defense.

\noindent\textbf{Experiment setup.} We take 5000 random ImageNet validation images and the entire validation sets of ImageNette and CIFAR-10 for the case study. We use a patch consisting of 3\% of the image pixels for an empirical attack.
Further details about the attack setup and datasets are covered in Appendix~\ref{apx-attack} and \ref{apx-setup}. We extract the local logits (as defined in Section~\ref{sec-defense-notation}) from adversarial images for further analysis.

\begin{table}[t]
    \centering
    \caption{Percentage of incorrect predictions of ResNet-50}
    \label{tab-local-attack}
   \resizebox{\linewidth}{!}{   \begin{tabular}{c|c|c|c}
    \toprule
    Dataset     &  ImageNet & ImageNette & CIFAR-10\\
    Patch size  & 3\% pixels & 3\% pixels & 3\% pixels\\
         \midrule
Incorrect local pred. (attacked) &84.4\% &56.4\%&67.0\%\\
Incorrect local pred. (original)& 59.9\% & 15.3\% & 27.0\%\\
Incorrect local pred. (difference) &24.5\% &41.1\% & 40.0\% \\
Incorrect global predictions &99.9\%&99.1\%&95.5\%\\
\bottomrule
    \end{tabular}}
\end{table}
\begin{figure}[t]
    \centering
    \includegraphics[width=0.7\linewidth]{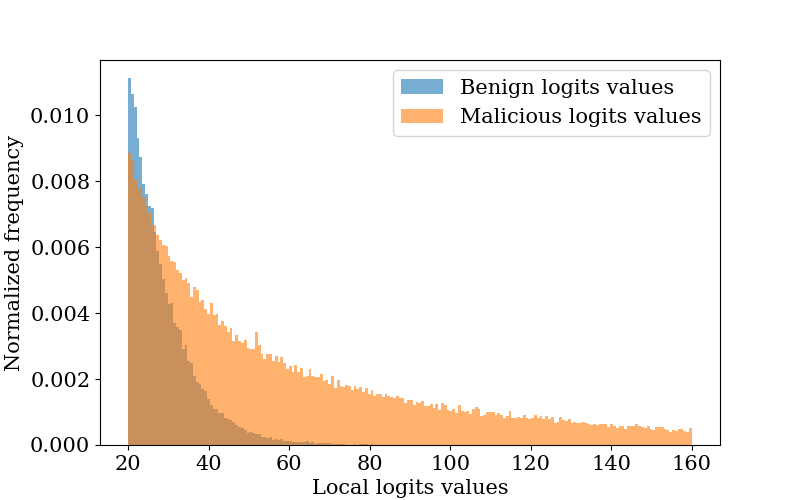}
    \caption{Histogram of large local logits values for ImageNet adversarial images (only positive values larger than 20 are shown; similar results for ImageNette and CIFAR-10 are in Appendix~\ref{apx-local-attack}).}
    \label{fig-hist-net}
\end{figure}

\noindent\textbf{Vulnerability I: the small adversarial patch appears in the large receptive fields of most local features and is able to manipulate the local predictions.}
In Table~\ref{tab-local-attack}, we report the percentage of incorrect local predictions of the adversarial images (attacked) and clean images (original) as well as their percentage difference. We can see that a small patch that only takes up 3\% of the image pixels can corrupt 24.5\% additional local predictions for ImageNet images, 41.1\% for ImageNette, and 40.0\% for CIFAR-10. As shown in the table, the large portion of incorrect local predictions finally leads to a high percentage of incorrect global predictions.
This vulnerability mainly stems from the large receptive field of ResNet-50. Each local feature of ResNet-50 is influenced by a 483$\times$483 pixel region in the input space (with zero padding)~\cite{araujo2019computing}; therefore, even if the adversarial patch only appears in a small restricted area, it is still within the receptive field of many local features and can manipulate the local predictions.\footnote{We note that a patch appearing in the receptive field of a local feature does not necessarily indicate a successful local feature corruption. Each local feature focuses exponentially more on the center of its receptive field (further details are in Appendix~\ref{apx-receptive}). When the adversarial patch is far away from the center of the receptive field, its influence on the feature is greatly limited.} This observation motivates the use of small receptive fields: if the receptive field is small, it ensures that only a limited number of local features can be corrupted by an adversarial patch, and robust prediction may be possible.

\noindent\textbf{Vulnerability II: the adversarial patch creates large malicious local feature values and makes linear feature aggregation insecure.} In Figure~\ref{fig-hist-net}, we plot the histogram of class evidence of the true class and the malicious class of the adversarial images from ImageNet.
As we can see from Figure~\ref{fig-hist-net}, the adversarial patch tends to create extremely large malicious class evidence to increase the chance of a successful attack. Conventional CNNs use simple linear operations such as average pooling to aggregate all local features, and thus are vulnerable to these large malicious feature values. This observation motivates our development of \textit{robust masking} as a secure feature aggregation mechanism.

\subsection{Overview of \framework}
\label{sec-overview}
In Section~\ref{sec-motivation}, we identified the large receptive field and insecure aggregation of conventional CNNs as two major sources of model vulnerability. In this subsection, we provide an overview of our defense that tackles both problems.

Recall that Figure~\ref{fig-overview} provides an overview of our defense framework. We consider a CNN $\mathcal{M}$ with small receptive fields. The feature extractor $\mathcal{F}(\mathbf{x})$ produces the local feature tensor $\mathbf{u}$ extracted from the input image $\mathbf{x}$, where $\mathbf{u}$ can be any one of the logits, confidence, or model prediction tensor. Our defense framework is compatible with any CNN with small receptive fields, and we will present two general ways of building such networks in Section~\ref{sec-small-field}. The small receptive field ensures that only a small fraction of features are corrupted by a localized adversarial patch. However, the insecure aggregation of these features via average pooling or summation might still result in a misclassification. To address this vulnerability, we propose a \emph{robust masking} algorithm for secure feature aggregation. 

In \textit{robust masking}, we detect and mask the corrupted features in the local feature tensor $\mathbf{u}=\mathcal{F}(\mathbf{x})$. Since the number of corrupted local features is limited due to the small receptive field, the adversary is forced to create large feature values to dominate the global prediction. These large feature values lead to a distinct pattern and enable our detection of corrupted features. Further, we empirically find that that model predictions are generally invariant to the removal of partial features (Section~\ref{sec-eval-detail-vanilla}). Therefore, once the corrupted features are masked, we are likely to recover the correct prediction $y$ with the remaining local features (right part of Figure~\ref{fig-overview}). This defense introduces a dilemma for the adversary: either to generate conspicuous malicious features that will be detected and masked by our defense or to use stealthy but ineffective adversarial patches. This fundamental dilemma enables \emph{provable robustness}. We will introduce the details of robust masking in Section~\ref{sec-masking}, and perform its provable analysis in Section~\ref{sec-provable}.

\begin{figure}[t]
    \centering
    \includegraphics[width=\linewidth]{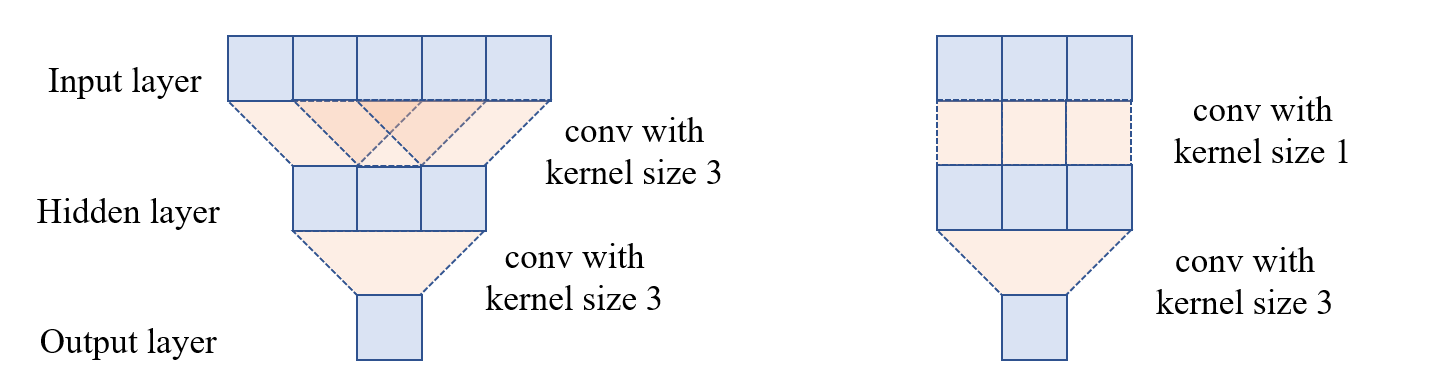}
    \caption{Effect of the convolution kernel size on the output receptive field size (left: two convolutions with a kernel size of 3; right: two convolutions with a kernel size of 1 and 3, respectively).}
    \label{fig-receptive}
\end{figure}

\subsection{CNNs with Small Receptive Fields}\label{sec-small-field}

Our defense framework is \emph{compatible with any CNN with small receptive fields}.\footnote{The receptive field should be \textit{small} compared with the input image size.} In this subsection, we discuss two general ways to build such CNNs; our goal is to reduce the number of image pixels that can affect a particular feature.

\noindent \textbf{Building an ensemble model.} One approach to design a network with small receptive fields is to divide the original image into multiple small pixel patches and feed each pixel patch to a \emph{base} model for separate classification. We can then build an \emph{ensemble} model aggregating the output of \emph{base} models. In this ensemble model, a local feature is the base model output, which can be logits, confidence, or prediction. Since the base model only takes a small pixel patch as input, each local feature is only affected by a small number of pixels, and thus the ensemble model has a small receptive field. We note that as the image resolution becomes higher, the number of all possible pixel patches increases greatly, which leads to a huge training and testing computation cost of the ensemble model. A natural approach to reduce the computation cost is to do inference on a sub-sampled set of small pixel patches.

\noindent \textbf{Using small convolution kernels.} A more efficient approach is to use small convolution kernels in conventional CNN architectures. 
In Figure~\ref{fig-receptive}, we provide an illustration for 1-D convolution computation with different kernel sizes. As we can see, the output cell is affected by all 5 input cells when using two convolutions with a kernel size of 3 (left) while each output cell is only affected by 3 input cells when reducing the size of one kernel to 1 (right). This logic extends directly to the large CNNs used in practice by replacing large convolution kernels with small kernels. 
Moreover, we can use a convolution stride to skip a portion of small pixel patches to reduce the computation cost. The modified CNN can be regarded as an ensemble model from a subset of all possible pixel patches. With this formulation, we can efficiently extract all local features with \emph{one-time} model feed-forward computation. 
In Section~\ref{sec-evaluation}, we will instantiate both approaches by adapting the implementation from Levine et al.~\cite{ds} and Brendel et al.~\cite{bagnet} and compare their performance in terms of accuracy and efficiency.

\noindent \textbf{Remark: translation from images into features.} \textit{The use of CNNs with small receptive fields translates the adversarial patch defense problem from the image space to the feature space.} That is, the problem becomes one of performing robust prediction from the feature space where a limited-size contiguous region is corrupted (due to a limited-size contiguous adversarial patch in the image space). The security analysis in the feature space (i.e., local logits, confidence, or prediction tensor) is simplified due to the use of linear aggregation, in contrast with the high non-linearity of CNN models if we directly analyze the input image. This observation enables our efficient robust masking technique as well as our provable analysis.

\subsection{Robust Masking}\label{sec-masking}
Given that an adversarial patch can only corrupt a limited number of local features with small receptive fields, the adversary is forced to create a small region of abnormally high feature values to induce misclassification. In order to detect this corrupted region, we clip the feature values and use a sliding window to find the region with the highest class evidence for each of the classes. We then apply a mask to the suspected region for each class so that the final classification is not influenced by the adversarial features. The defense algorithm is shown in Algorithm~\ref{alg-masking}. 

\noindent\textbf{Clipping.} As shown in Algorithm~\ref{alg-masking}, our defense will iterate over all possible classes in $\mathcal{Y}$. For each class $\bar{y}$, we first get its corresponding clipped local feature tensor $\hat{\mathbf{u}}_{\bar{y}}$ from the undefended model. We set the default values of the clipping bounds to $c_l=0,c_h=\infty$ for all feature types and datasets. When the feature type is logits, we clip the negative values to zero since our empirical analysis in Section~\ref{sec-eval-detail-vanilla} shows that they contribute little to the correct prediction of clean images but can be abused by the adversary to reduce the class evidence of the true class. If the feature is a confidence tensor or one-hot encoded prediction, it is unaffected by clipping, since its values are already bounded in $[0,1]$. 

\noindent\textbf{Feature windows.} We use a sliding window to detect and mask the abnormal region in the feature space. A window is a binary mask in the feature space whose size matches the upper bound of the number of local features that can be corrupted by the adversarial patch. Formally, let \texttt{p} be the upper bound of patch size in the threat model, \texttt{r} be the size of receptive field, and \texttt{s} be the stride of receptive field, which is the pixel distance between two adjacent receptive centers. We can compute the optimal window size \texttt{w} as
\begin{equation}\label{eq-receptive}
    \texttt{w} = \lceil(\texttt{p}+\texttt{r}-1)/\texttt{s}\rceil
\end{equation}
This equation can be derived by considering the worst-case patch location and counting the maximum number of corrupted local features. A detailed derivation is in Appendix~\ref{apx-receptive}. We note that the window size is a tunable security parameter and we use a conservative window size (computed with the upper bound of the patch size) to make robust masking agnostic to the actual patch size used in an attack. The implications of using an \emph{overly} conservative window size are discussed in Section~\ref{sec-eval-detail-masking} and Appendix~\ref{apx-mismatch}. We represent each window $\mathbf{w}$ with a binary feature map in $\{0,1\}^{W^\prime\times H^\prime}$, where features within the window have values of one. 

\begin{algorithm}[t]
\caption{Robust masking}\label{alg-masking}
\begin{algorithmic}[1]
\renewcommand{\algorithmicrequire}{\textbf{Input:}}
\renewcommand{\algorithmicensure}{\textbf{Output:}}
\Require Image $\mathbf{x}$, label space $\mathcal{Y}$, feature extractor $\mathcal{F}$ of model $\mathcal{M}$, clipping bound $[c_l,c_h]$, the set of sliding windows $\mathcal{W}$, and detection threshold $T\in[0,1]$. Default setting: $c_l=0,c_h=\infty,T=0$.
\Ensure  Robust prediction $y^*$
\Procedure{RobustMasking}{}
\For{each $\bar{y} \in \mathcal{Y}$}
\State $\mathbf{u}_{\bar{y}} \gets \mathcal{F}(\mathbf{x},\bar{y})$ \Comment{Local feature for class $\bar{y}$} 
\State $\hat{\mathbf{u}}_{\bar{y}} \gets \textsc{Clip}(\mathbf{u}_{\bar{y}},c_l,c_h)$ \Comment{Clipped local features}
\State $\mathbf{w}^*_{\bar{y}} \gets \textsc{Detect}(\hat{\mathbf{u}}_{\bar{y}},T,\mathcal{W})$ \Comment{Detected window}

\State $s_{\bar{y}} \gets \textsc{Sum}(\hat{\mathbf{u}}_{\bar{y}}\odot (\mathbf{1}-\mathbf{w}^*_{\bar{y}}))$ \Comment{Applying the mask} 

\EndFor

\State $y^*\gets \arg\max_{\bar{y}\in\mathcal{Y}}(s_{\bar{y}})$

\State\Return  $y^*$ 
\EndProcedure

\item[]
\Procedure{Detect}{$\hat{\mathbf{u}}_{\bar{y}},T,\mathcal{W}$}
\State $\mathbf{w}^*_{\bar{y}} \gets \arg\max_{\mathbf{w} \in \mathcal{W}} \textsc{Sum}(\mathbf{w}\odot \hat{\mathbf{u}}_{\bar{y}})$ \Comment{Detection}
\State $b\gets \textsc{Sum}(\mathbf{w}^*_{\bar{y}}\odot \hat{\mathbf{u}}_{\bar{y}}) / \textsc{Sum}(\hat{\mathbf{u}}_{\bar{y}})$ \Comment{Normalization}
\If {$b \leq T$}
\State $\mathbf{w}^*_{\bar{y}} \gets \mathbf{0}$ \Comment{An empty mask returned}
\EndIf
\State \Return $\mathbf{w}^*_{\bar{y}}$
\EndProcedure

\end{algorithmic} 
\end{algorithm}

\noindent\textbf{Detection.} We use the subprocedure \textsc{Detect} to examine the clipped local feature tensor $\hat{\mathbf{u}}_{\bar{y}}$ and detect the suspicious region. \textsc{Detect} takes the feature tensor $\hat{\mathbf{u}}_{\bar{y}}$, the normalized detection threshold $T\in[0,1]$, and a set of sliding windows $\mathcal{W}$ as inputs. To detect the malicious region, \textsc{Detect} calculates the sum of feature values (i.e., the class evidence) for class $\bar{y}$ within every possible window and identifies the window with the highest sum of class evidence. If the normalized highest class evidence exceeds the threshold $T$, we return the corresponding window $\mathbf{w}^*_{\bar{y}}$ as the suspicious window for that class; otherwise, we return an empty window $\mathbf{0}$.

\noindent\textbf{Masking.} If we detect a suspicious window in the local feature space, we mask the features within the suspicious area and calculate the sum of class evidence from the remaining features as $s_{\bar{y}} = \textsc{Sum}(\hat{\mathbf{u}}_{\bar{y}}\odot (\mathbf{1}-\mathbf{w}^*_{\bar{y}}))$. After we calculate the masked class evidence $s_{\bar{y}}$ for all possible classes in $\mathcal{Y}$, the defense outputs the prediction as the class with largest class evidence, i.e., $y^*=\arg\max_{\bar{y}\in\mathcal{Y}}(s_{\bar{y}})$.

\section{Provable Robustness Analysis}\label{sec-provable}

In this section, we provide provable robustness analysis for our robust masking defense. For any clean image $\mathbf{x}$ and a given model $\mathcal{M}$, we will determine whether \textit{any attacker}, \emph{with the knowledge of our defense}, can bypass the robust masking defense. Recall that our threat model allows the adversarial patches to be within one restricted region.
Given this threat model, all the corrupted features will also be within a small window in the feature map space when using a CNN with small receptive fields; we call this window \emph{malicious window}. 

\noindent \textbf{Provable Robustness via an adversary dilemma.} With the robust masking defense, we put the adversary in a dilemma. If the adversary wants to succeed in the attack, they need to increase the class evidence of a wrong class. However, increasing the class evidence will trigger our detection and masking mechanism that reduces the class evidence. As a result, this dilemma imposes an upper bound on the class evidence of any class ($s_{\bar{y}}$ in Line 6 of Algorithm~\ref{alg-masking}), which further enables provable robustness. In fact, we can first prove the following lemma. 

\begin{lemma}\label{lemma}
Given a malicious window $\mathbf{w} \in \mathcal{W}$, a class $\bar{y} \in \mathcal{Y}$, the set of sliding windows $\mathcal{W}$, the clipped and masked class evidence of class $\bar{y}$ (i.e., $s_{\bar{y}}$ in Algorithm~\ref{alg-masking}) can be no larger than $\textsc{Sum}(\hat{\mathbf{u}}_{\bar{y}}\odot (\mathbf{1}-\mathbf{w}))/(1-T)$ when setting $c_l=0$ and $T\in[0,1)$.
\end{lemma}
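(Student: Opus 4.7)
The plan is to case-split on whether the \textsc{Detect} subprocedure returns a nonempty window for class $\bar{y}$ or the empty mask $\mathbf{0}$. The setting $c_l=0$ makes every entry of $\hat{\mathbf{u}}_{\bar{y}}$ non-negative, which will be the single crucial ingredient that makes sums over subsets monotone in the set: removing a window can only decrease the sum, and conversely $\textsc{Sum}(\hat{\mathbf{u}}_{\bar{y}}\odot(\mathbf{1}-\mathbf{w}))\le\textsc{Sum}(\hat{\mathbf{u}}_{\bar{y}})$.

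First I would handle the easy case, where the normalized highest class evidence $b$ exceeds $T$, so \textsc{Detect} returns the arg-max window $\mathbf{w}^*_{\bar{y}}$. By optimality of $\mathbf{w}^*_{\bar{y}}$ over $\mathcal{W}$ and the fact that the malicious window $\mathbf{w}$ lies in $\mathcal{W}$, we have $\textsc{Sum}(\mathbf{w}^*_{\bar{y}}\odot\hat{\mathbf{u}}_{\bar{y}})\ge\textsc{Sum}(\mathbf{w}\odot\hat{\mathbf{u}}_{\bar{y}})$. Subtracting both sides from $\textsc{Sum}(\hat{\mathbf{u}}_{\bar{y}})$ and using non-negativity gives $s_{\bar{y}}=\textsc{Sum}(\hat{\mathbf{u}}_{\bar{y}}\odot(\mathbf{1}-\mathbf{w}^*_{\bar{y}}))\le\textsc{Sum}(\hat{\mathbf{u}}_{\bar{y}}\odot(\mathbf{1}-\mathbf{w}))$, which is even stronger than the claimed bound since $1/(1-T)\ge 1$.

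The second case, where $b\le T$ so that $\mathbf{w}^*_{\bar{y}}$ is replaced by $\mathbf{0}$ and no masking occurs, is where the $1/(1-T)$ factor actually appears. Here $s_{\bar{y}}=\textsc{Sum}(\hat{\mathbf{u}}_{\bar{y}})$, and the threshold condition gives $\textsc{Sum}(\mathbf{w}^*_{\bar{y}}\odot\hat{\mathbf{u}}_{\bar{y}})\le T\cdot\textsc{Sum}(\hat{\mathbf{u}}_{\bar{y}})$. Combining this with the same arg-max inequality $\textsc{Sum}(\mathbf{w}\odot\hat{\mathbf{u}}_{\bar{y}})\le\textsc{Sum}(\mathbf{w}^*_{\bar{y}}\odot\hat{\mathbf{u}}_{\bar{y}})$, I obtain $\textsc{Sum}(\hat{\mathbf{u}}_{\bar{y}}\odot(\mathbf{1}-\mathbf{w}))=\textsc{Sum}(\hat{\mathbf{u}}_{\bar{y}})-\textsc{Sum}(\mathbf{w}\odot\hat{\mathbf{u}}_{\bar{y}})\ge(1-T)\cdot\textsc{Sum}(\hat{\mathbf{u}}_{\bar{y}})$. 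Since $T<1$, dividing by $1-T$ yields $s_{\bar{y}}=\textsc{Sum}(\hat{\mathbf{u}}_{\bar{y}})\le\textsc{Sum}(\hat{\mathbf{u}}_{\bar{y}}\odot(\mathbf{1}-\mathbf{w}))/(1-T)$, as required.

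There is not really a hard step here; the proof is essentially bookkeeping once the two cases are separated. The only subtle points to flag are (i) that $c_l=0$ is what ensures both that $\textsc{Sum}$ is monotone in the masked set and that dividing by $\textsc{Sum}(\hat{\mathbf{u}}_{\bar{y}})$ in the normalization is well-defined in the sign sense, and (ii) that the strictness $T<1$ is needed to avoid a division by zero in the bound. Handling a potential degenerate case in which $\textsc{Sum}(\hat{\mathbf{u}}_{\bar{y}})=0$ is trivial, since then all entries of $\hat{\mathbf{u}}_{\bar{y}}$ are zero and $s_{\bar{y}}=0$ satisfies the inequality vacuously.
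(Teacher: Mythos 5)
Your proof is correct and follows essentially the same route as the paper's: the key facts in both are that the detected window maximizes class evidence over $\mathcal{W}$ (so it carries at least as much evidence as the malicious window) and that the no-detection case is governed by the threshold inequality, with $c_l=0$ supplying non-negativity. The only difference is presentational — you collapse the paper's Cases I--III (perfect, benign, and partial detection) into a single arg-max inequality on sums, which is a clean streamlining rather than a new idea, and you additionally flag the degenerate $\textsc{Sum}(\hat{\mathbf{u}}_{\bar{y}})=0$ case that the paper leaves implicit.
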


\begin{proof}
The goal of the adversary is to modify the content within the malicious window $\mathbf{w}$ to bypass our defense. Let $e$ be the amount of class evidence within $\mathbf{w}$ and $t=\textsc{Sum}(\hat{\mathbf{u}}_{\bar{y}}\odot (\mathbf{1}-\mathbf{w}))$ be the class evidence outside $\mathbf{w}$. Note that the adversary has control over the value $e$ but not $t$, and that the total class evidence of the modified malicious feature tensor is now $t+e$. Next, the subprocedure \textsc{Detect} will take the malicious feature tensor as input and detect a suspicious window $\mathbf{w}^*_{\bar{y}}$. Finally, a mask is applied and the class evidence is reduced to $s_{\bar{y}}=t+e-e^\prime$, where $e^\prime$ is the class evidence within the detected window $\mathbf{w}^*_{\bar{y}}$. To obtain the upper bound of $s_{\bar{y}}$ given a specific malicious window $\mathbf{w}$, we will determine the ranges of $e,e^\prime$ in four possible cases of the detected window $\mathbf{w}^*_{\bar{y}}$, as illustrated in Figure~\ref{fig-lemma}.
\begin{enumerate}
    \item \textit{Case I: the malicious window is perfectly detected.} In this case, we have $\mathbf{w}=\mathbf{w}^*_{\bar{y}}$ and thus $e=e^\prime$. The class evidence $s_{\bar{y}}=t+e-e^\prime=t$. 
    \item \textit{Case II: a benign window is incorrectly detected.} In this case, we have $e^\prime=\textsc{Sum}(\hat{\mathbf{u}}_{\bar{y}}\odot\mathbf{w}^*_{\bar{y}})$. The adversary has the constraint that $e\leq e^\prime$; otherwise, the malicious window $\mathbf{w}$ instead of $\mathbf{w}^*_{\bar{y}}$ will be detected. Therefore, we have $s_{\bar{y}}=t+e-e^\prime \leq t$.
    \item \textit{Case III: the malicious window is partially detected.} Let $\mathbf{r}_1 = \mathbf{w}^*_{\bar{y}}\odot (\mathbf{1}-\mathbf{w})$ be the detected benign region, $\mathbf{r}_2=\mathbf{w}^*_{\bar{y}}\odot \mathbf{w}$ be the detected malicious region, and $\mathbf{r}_3=(\mathbf{1}-\mathbf{w}^*_{\bar{y}})\odot \mathbf{w}$ be the undetected malicious region. Let $q_1,q_2,q_3$ be the class evidence within region $\mathbf{r}_1,\mathbf{r}_2,\mathbf{r}_3$, respectively. We have $e=q_2+q_3$ and $e^\prime=q_1+q_2$. Similar to \textit{Case II}, the adversary has the constraint that $e \leq e^\prime$, or $q_3 \leq q_1$; otherwise, $\mathbf{w}$ instead of $\mathbf{w}^*_{\bar{y}}$ will be detected. Therefore, we have $s_{\bar{y}}=t+e-e^\prime = t + q_3 -q_1 \leq t$.
    \item \textit{Case IV: no suspicious window detected.} This case happens when the largest sum within every possible window does not exceed the detection threshold. We have $e/{(e+t)}\leq T$, which yields $e\leq tT/(1-T)$. We also have $e^\prime=0$ since no mask is applied. Therefore, the class evidence satisfies $s_{\bar{y}} = t+e \leq t/(1-T)$, where $T\in[0,1]$.
\end{enumerate}
Combining the above four cases, we have the upper bound of the target class evidence to be $t/(1-T)=\textsc{Sum}(\hat{\mathbf{u}}_{\bar{y}}\odot (\mathbf{1}-\mathbf{w}))/(1-T)$.
\end{proof}

\begin{figure}[t]
    \centering
    \includegraphics[width=\linewidth]{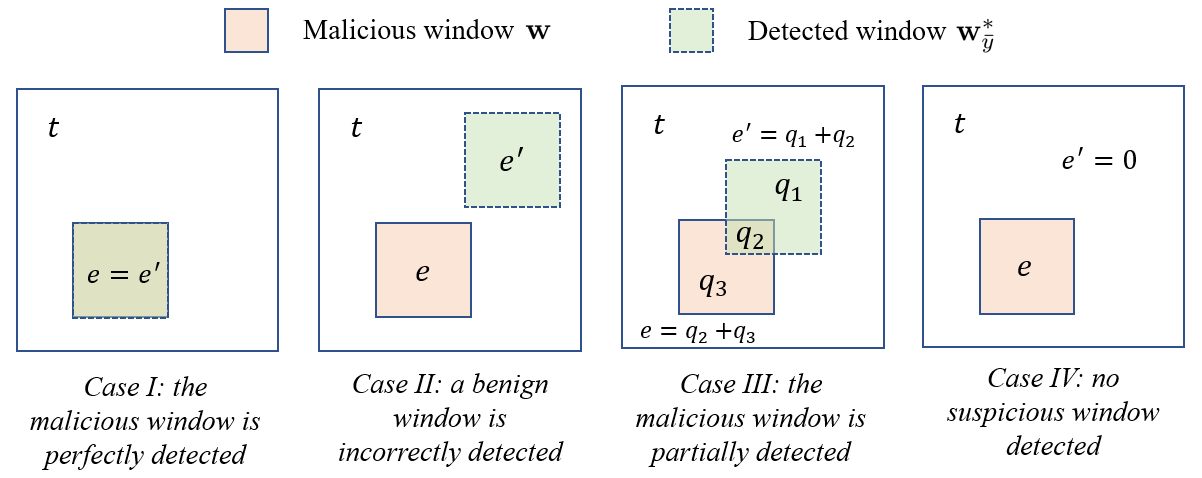}
    \caption{\small Illustrations for four cases of detected window $\mathbf{w}^*_{\bar{y}}$. The clipped and masked class evidence satisfies $s_{\bar{y}}=t+e-e^\prime$. For \textit{Case I, II, III}, we have $e\leq e^\prime$ and therefore $s_{\bar{y}}\leq t$. For \textit{Case IV}, we have $e\leq tT/(1-T),e^\prime=0$ and therefore $s_{\bar{y}}\leq t/(1-T)$.}
    \label{fig-lemma}
\end{figure}

\noindent \textbf{Provable analysis.} Lemma~\ref{lemma} shows that \textit{robust masking} limits the adversary's ability to increase the malicious class evidence. If the upper bound of malicious class evidence is not large enough to dominate the lower bound of the true class evidence, we can certify the robustness of our defense on a given clean image. The pseudocode of our provable analysis is provided in Algorithm~\ref{alg-provable-masking}. Next, we will explain our analysis by proving the following theorem.

\begin{theorem}\label{thm}
Let $c_l=0$, $T\in[0,1)$,  $\mathbf{w}\in\mathcal{W}$ denote the sliding windows whose sizes are determined by Equation~\ref{eq-receptive}, and $\mathcal{A}(\mathbf{x})$ denote the adversary's constraint as defined in Section~\ref{sec-attack}. 
If Algorithm~\ref{alg-provable-masking} returns \texttt{True} for a given image $\mathbf{x}$, our defense in Algorithm~\ref{alg-masking} can always make a correct prediction on any adversarial image $\mathbf{x}^\prime \in \mathcal{A}(\mathbf{x})$.
\end{theorem}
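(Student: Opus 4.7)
The plan is to reduce Theorem~\ref{thm} to a feature-space inequality that depends only on the clean image $\mathbf{x}$, and then to take a union bound over all candidate malicious windows. First, I would invoke the receptive-field bookkeeping behind Equation~\ref{eq-receptive}: for any $\mathbf{x}^\prime\in\mathcal{A}(\mathbf{x})$, the set of local features that can possibly differ between $\mathbf{x}$ and $\mathbf{x}^\prime$ is contained in some window $\mathbf{w}\in\mathcal{W}$. Because clipping is element-wise with $c_l=0$ and features outside $\mathbf{w}$ are untouched, for every class $\bar y$ the clipped tensor $\hat{\mathbf{u}}'_{\bar y}$ derived from $\mathbf{x}^\prime$ agrees with $\hat{\mathbf{u}}_{\bar y}$ outside $\mathbf{w}$, while its values inside $\mathbf{w}$ can be set to anything in $[0,c_h]$ by the adversary. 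Certifying robustness then reduces to: for every $\mathbf{w}\in\mathcal{W}$, show that Algorithm~\ref{alg-masking}'s class score $s_y$ strictly exceeds $s_{\bar y}$ for every $\bar y\neq y$, no matter how the adversary fills $\mathbf{w}$.

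For each wrong class, Lemma~\ref{lemma} immediately upper-bounds $s_{\bar y}$ by $\textsc{Sum}(\hat{\mathbf{u}}_{\bar y}\odot(\mathbf{1}-\mathbf{w}))/(1-T)$, a quantity computable from $\mathbf{x}$ alone. The symmetric task is to derive a worst-case \emph{lower} bound on $s_y$. Here I would argue that the adversary's best strategy against the true class is to zero out all class-$y$ evidence inside $\mathbf{w}$: any positive injection $e'$ in $\mathbf{w}$ either drags the argmax window $\mathbf{w}^*_y$ toward $\mathbf{w}$ (in which case the mask removes mostly the adversary's own contribution and $s_y\geq\textsc{Sum}(\hat{\mathbf{u}}_y\odot(\mathbf{1}-\mathbf{w}))$), or forces a detected window whose benign content outside $\mathbf{w}$ is at most $\max_{\mathbf{w}'\in\mathcal{W}}\textsc{Sum}(\mathbf{w}'\odot\hat{\mathbf{u}}_y\odot(\mathbf{1}-\mathbf{w}))$. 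Either way,
\[
s_y\ \geq\ \textsc{Sum}\bigl(\hat{\mathbf{u}}_y\odot(\mathbf{1}-\mathbf{w})\bigr)\ -\ \max_{\mathbf{w}'\in\mathcal{W}}\textsc{Sum}\bigl(\mathbf{w}'\odot\hat{\mathbf{u}}_y\odot(\mathbf{1}-\mathbf{w})\bigr),
\]
again a purely clean-image quantity. A small separate subcase handles $T>0$ where \textsc{Detect} may not trigger, in which case $s_y$ only grows and the bound remains valid.

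Finally I would interpret Algorithm~\ref{alg-provable-masking} as exactly the routine that, for every $\mathbf{w}\in\mathcal{W}$ and every $\bar y\neq y$, checks that this lower bound on $s_y$ strictly exceeds Lemma~\ref{lemma}'s upper bound on $s_{\bar y}$; a \texttt{True} return guarantees the check uniformly. Since the actual corrupted region is covered by at least one $\mathbf{w}\in\mathcal{W}$, the strict inequality propagates to any $\mathbf{x}^\prime\in\mathcal{A}(\mathbf{x})$ and forces $\arg\max_{\bar y}s_{\bar y}=y$ in Algorithm~\ref{alg-masking}. The main obstacle I expect is the lower bound on $s_y$: unlike Lemma~\ref{lemma}, which bounds a value the adversary wants to inflate, here I must rule out a subtle strategy in which the adversary \emph{weaponizes} the defense's own masking to erase benign true-class evidence. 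Carefully mirroring the four-case breakdown used in Lemma~\ref{lemma}---and in particular showing that the partial-overlap case with sub-regions $\mathbf{r}_1,\mathbf{r}_2,\mathbf{r}_3$ can never beat the simple zero-injection strategy---will be the most delicate step.
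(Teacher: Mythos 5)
Your proposal is correct and follows essentially the same route as the paper: enumerate every candidate malicious window $\mathbf{w}\in\mathcal{W}$ and every wrong class, upper-bound each wrong class's masked evidence via Lemma~\ref{lemma}, lower-bound the true class's evidence by letting the adversary zero out class-$y$ evidence inside $\mathbf{w}$ and then accounting for the worst-case mask, and certify when the latter dominates the former uniformly over windows. The only point worth noting is your true-class lower bound $\textsc{Sum}(\hat{\mathbf{u}}_y\odot(\mathbf{1}-\mathbf{w}))-\max_{\mathbf{w}'\in\mathcal{W}}\textsc{Sum}(\mathbf{w}'\odot\hat{\mathbf{u}}_y\odot(\mathbf{1}-\mathbf{w}))$: at the default $T=0$ it coincides exactly with the quantity $\textsc{Sum}(\hat{\mathbf{u}}_{y}\odot(\mathbf{1}-\mathbf{w})\odot(\mathbf{1}-\mathbf{w}^*_y))$ computed in Algorithm~\ref{alg-provable-masking}, and for $T>0$ your ``always subtract the best benign window'' form is arguably the more careful one, since an adversary could inject true-class evidence into $\mathbf{w}$ to force a detection that the clean zeroed tensor alone would not trigger.
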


\begin{proof}
Our provable analysis in Algorithm~\ref{alg-provable-masking} iterates over all possible windows $\mathbf{w}\in\mathcal{W} $ and all possible target classes $y^\prime\in\mathcal{Y}^\prime=\mathcal{Y}\setminus\{y\}$ to derive provable robustness for the untargeted attack with a patch at any location. For each possible malicious window $\mathbf{w}$, Algorithm~\ref{alg-provable-masking} determines the upper bound of the class evidence of each target class (Line 3-6) and the lower bound of the class evidence of the true class (Line 7-9). 

For each target class $y^\prime$, we can apply Lemma~\ref{lemma} and get the upper bound $\overline{s}_{y^\prime}= \textsc{Sum}(\hat{\mathbf{u}}_{y^\prime}\odot(\mathbf{1}-\mathbf{w}))/(1-T)$. 

For the true class $y$, the optimal attacking strategy is to set all true class evidence within the malicious window $\mathbf{w}$ to $c_l=0$. Note that the true class evidence within the detected window $\mathbf{w}^*_y$ (if any) will be masked. Therefore, the lower bound $\underbar{s}_y$ is equivalent to removing class evidence within $\mathbf{w}$ and $\mathbf{w}^*_y$, i.e., $\underbar{s}_y= \textsc{Sum}(\hat{\mathbf{u}}_{y}\odot(\mathbf{1}-\mathbf{w})\odot(\mathbf{1}-\mathbf{\mathbf{w}}^*_y))$. 

The final step is to compare the upper bound of target class evidence $\overline{s}_{y^\prime}$ with the lower bound of true class evidence $\underbar{s}_y$. If the condition $\max_{y^\prime\in\mathcal{Y}^\prime}(\overline{s}_{y^\prime})>\underbar{s}_y$ is satisfied, we assume an attack is possible and the algorithm returns \texttt{False}.
On the other hand, if Algorithm~\ref{alg-provable-masking} checks all possible malicious windows $\mathbf{w}\in\mathcal{W}$ for all possible target classes $y^\prime\in\mathcal{Y}^\prime$ and does not return \texttt{False} in any case, this means our defense on this clean image has provable robustness against any possible patch and can always make a correct prediction.
\end{proof}

\noindent\textbf{Provable adversarial training.} We note that our provable analysis can be incorporated into the training process to improve provable robustness. We call this ``provable adversarial training" and will discuss its details in Appendix~\ref{apx-training}.

\begin{algorithm}[t]
\caption{Provable analysis of robust masking}\label{alg-provable-masking}
\begin{algorithmic}[1]
\renewcommand{\algorithmicrequire}{\textbf{Input:}}
\renewcommand{\algorithmicensure}{\textbf{Output:}}
\Require Image $\mathbf{x}$, true class $y$, wrong label set $\mathcal{Y}^\prime = \mathcal{Y}\setminus \{y\}$, feature extractor $\mathcal{F}$ of model $\mathcal{M}$, clipping upper bound $c_h$, the set of sliding windows $\mathcal{W}$, detection threshold $T$.
\Ensure  Whether the image $\mathbf{x}$ has provable robustness
\Procedure{ProvableAnalysisMasking}{}

\For{each $\mathbf{w} \in \mathcal{W}$} 
\LeftCommentb{Upper bound of target class evidence}
\For{each $y^\prime \in \mathcal{Y}^\prime$}
\State $\hat{\mathbf{u}}_{y^\prime} \gets \textsc{Clip}(\mathcal{F}(\mathbf{x},y^\prime),0,c_h)$ 
\State $\overline{s}_{y^\prime}\gets \textsc{Sum}(\hat{\mathbf{u}}_{y^\prime}\odot(\mathbf{1}-\mathbf{w}))/(1-T)$

\EndFor
\LeftCommentb{Lower bound of true class evidence}
\State $\hat{\mathbf{u}}_{y} \gets \textsc{Clip}(\mathcal{F}(\mathbf{x},y),0,c_h)$ 
\State $\mathbf{w}^*_y \gets \textsc{Detect}(\hat{\mathbf{u}}_y\odot(\mathbf{1}-\mathbf{w}),\mathcal{W},T)$
\State $\underbar{s}_y\gets \textsc{Sum}(\hat{\mathbf{u}}_{y}\odot(\mathbf{1}-\mathbf{w})\odot(\mathbf{1}-\mathbf{\mathbf{w}}^*_y))$
\LeftCommentb{Feasibility of an attack}
\If{$\max_{y^\prime\in\mathcal{Y}^\prime}(\overline{s}_{y^\prime})>\underbar{s}_y$}
\State \Return  \texttt{False} 
\EndIf
\EndFor
\State\Return  \texttt{True} 
\EndProcedure

\end{algorithmic} 
\end{algorithm}

\section{Evaluation}\label{sec-evaluation}
In this section, we provide a comprehensive evaluation of \framework. We report the provable robust accuracy of our defense (obtained from Algorithm~\ref{alg-provable-masking} and Theorem~\ref{thm}) on the ImageNet~\cite{deng2009imagenet}, ImageNette~\cite{imagenette}, and CIFAR-10~\cite{krizhevsky2009learning} datasets for various patch sizes. We instantiate our defense with multiple different CNNs with small receptive fields and compare their performance with previous provably robust defenses~\cite{chiang2020certified,zhang2020clipped,levine2020randomized}. We also provide a detailed analysis of our defense performance with different settings.  

\begin{table}[b]
    \centering
        \caption{Default defense settings for Mask-BN and Mask-DS}
        \resizebox{\linewidth}{!}{
    \begin{tabular}{l|l|l}
    \toprule
       Setting  & Feature & Parameters \\
       \midrule
        Mask-BN on ImageNet(te) &  BagNet-17 logits &\multirow{4}{*}{\shortstack{$c_l=0$\\$c_h=\infty$\\$T=0$} }\\
        Mask-BN on CIFAR-10 & BagNet-17 logits & \\
        \cmidrule{1-2}
        Mask-DS on ImageNet(te) & DS-25-ResNet-50 confidence &\\
        Mask-DS on CIFAR-10 &  DS-4-ResNet-18 confidence & \\
        \bottomrule
    \end{tabular}}
    \label{tab-default}
\end{table}

\begin{table*}[t]
    \centering
        \caption{Clean and provable robust accuracy for different defenses}
    \resizebox{\linewidth}{!} {\scriptsize
     \begin{tabular}{c|c|c|c|c|c|c|c|c|c|c|c|c|c|c|c|c}
    \toprule
    Dataset & \multicolumn{6}{c|}{ImageNette} & \multicolumn{6}{c|}{ImageNet} & \multicolumn{4}{c}{CIFAR-10}\\
    \midrule
    Patch size  &  \multicolumn{2}{c|}{1\% pixels}   &  \multicolumn{2}{c|}{2\% pixels} & \multicolumn{2}{c|}{3\% pixels} & \multicolumn{2}{c|}{1\% pixels} &         \multicolumn{2}{c|}{2\% pixels} &          \multicolumn{2}{c|}{3\% pixels} & 
    \multicolumn{2}{c|}{0.4\% pixels}& 
    \multicolumn{2}{c}{2.4\% pixels}  \\
         \midrule
       Accuracy   & clean & robust &clean & robust & clean & robust & clean & robust & clean & robust & clean &  robust&clean& robust &clean& robust \\

    \midrule
    Mask-BN & \textbf{95.2} &\textbf{89.0}& \textbf{95.0}& \textbf{86.7} & 94.8&\textbf{83.0} & \textbf{55.1} & \textbf{32.3} & \textbf{54.6}  & \textbf{26.0}& \textbf{54.1}& \textbf{19.7} &   84.5 &63.8&83.9& {47.3} \\
    Mask-DS&  92.3 & 83.1 &  92.1 & 79.9 & 92.1 & 76.8& 44.1&{19.7}&43.6&{15.7}&43.0&{12.5}  & \textbf{84.7} & \textbf{69.2} &  \textbf{84.6}& \textbf{57.7}  \\
    
    \midrule
    IBP~\cite{chiang2020certified} & \multicolumn{12}{c|}{computationally infeasible}  & 65.8& 51.9& 47.8 & 30.8\\
CBN~\cite{zhang2020clipped}& {94.9} &74.6&{94.9} & 60.9 & \textbf{94.9}& 45.9 &49.5& 13.4&49.5 &7.1  &49.5 &3.1 &{84.2}& 44.2& {84.2}&9.3 \\
    DS~\cite{levine2020randomized}& 92.1& 82.3&92.1&79.1& 92.1& 75.7 &  44.4 & 17.7& 44.4&14.0& 44.4 &11.2 & 83.9&68.9& 83.9&56.2 \\

    \bottomrule
    \end{tabular}}
    \label{tab-huge-provable}
\end{table*}

\subsection{Experiment Setup}\label{sec-exp-setup}
\noindent\textbf{Datasets.} We report our main provable robustness results on the 1000-class ImageNet~\cite{deng2009imagenet}, 10-class ImageNette~\cite{imagenette}, and 10-class CIFAR-10~\cite{krizhevsky2009learning} datasets. ImageNet and ImageNette images have a high resolution and were resized and cropped to 224$\times$224 or 299$\times$299 before being fed into different models while CIFAR-10 images have a lower resolution of 32$\times$32. CIFAR-10 images are rescaled to 192$\times$192 before being fed to BagNet. Further details are in Appendix~\ref{apx-dataset}.

\noindent\textbf{Models.} As discussed in Section~\ref{sec-small-field}, we have two general ways to build a network with small receptive fields. In our evaluation, we instantiate the ensemble approach using a de-randomized smoothed ResNet (DS-ResNet)~\cite{levine2020randomized}, and the small convolution kernel approach using BagNet~\cite{brendel2019approximating}. The DS-ResNet~\cite{levine2020randomized} takes a rectangle pixel patch, or a pixel band, as the input of its base model and uses prediction majority voting for the ensemble prediction. In contrast, our defense uses \emph{robust masking} for aggregation. The BagNet~\cite{brendel2019approximating} architecture replaces a fraction of 3$\times$3 convolution kernels of ResNet-50 with 1$\times$1 kernels to reduce the receptive field size. It was originally proposed in the context of interpretable machine learning while we use this model for provable robustness against adversarial patch attacks. 
\\We analyze performance of ResNet-50, BagNet-33, BagNet-17, BagNet-9, and DS-25-ResNet-50. These 5 models have a similar network structure but have different receptive fields of 483$\times$483, 33$\times$33, 17$\times$17, 9$\times$9, and 25$\times$299, respectively. For CIFAR-10, we additionally include a DS-ResNet-18 with a band size of 4 (DS-4-ResNet-18). Model training details are in Appendix~\ref{apx-training}.

\noindent\textbf{Defenses.} We report the defense performance of our robust masking defense with the BagNet (Mask-BN) and with the DS-ResNet (Mask-DS). We also compare with the existing Clipped BagNet (CBN)~\cite{zhang2020clipped}, De-randomized Smoothing (DS)~\cite{levine2020randomized} and Interval Bound Propagation based certified defense (IBP)~\cite{chiang2020certified}. The default settings of our defense are listed in Table~\ref{tab-default}. Note that for \framework, we use the same set of parameters (i.e., $c_l,c_h,T$) for all datasets and models. For previous defenses, we use the optimal parameter settings obtained from their respective papers.

\noindent\textbf{Attack Patch Size.} For ImageNet and ImageNette, we analyze our defense performance against a single square adversarial patch that consists of up to 1\%, 2\%, or 3\% pixels of the images. For CIFAR-10, we report results for a patch consisting of 0.4\% or 2.4\% of the image pixels. In Appendix~\ref{apx-large-patch}, we analyze the defense performance against larger patches to understand the limits of \framework.

\subsection{Provable Robustness Results}\label{sec-eval-provable}
In this subsection, we present \textit{provable robustness} results for our defense (computed with Algorithm~\ref{alg-provable-masking} and Theorem~\ref{thm}); the results hold for \textit{any attack} within the corresponding patch size constrain. We also compare \framework with previous provably robust defenses~\cite{chiang2020certified,zhang2020clipped,levine2020randomized}.

\noindent \textbf{\framework achieves high provable robustness across different models and datasets.} We report the provable robust accuracy of \framework across different models, patch sizes, and datasets in Table~\ref{tab-huge-provable}. First, both Mask-BN and Mask-DS achieve high provable robustness. For example, against a 1\% pixel patch on the 10-class ImageNette dataset, Mask-BN has a provable robust accuracy of 89.0\% while Mask-DS has that of 83.1\%. This implies that for 89.0\% and 83.1\% of the images from the respective test sets, \emph{no attack using a 1\%-pixel square patch can succeed}. Second, \framework has high provable robustness across different datasets. Even for the extremely challenging 1000-class ImageNet dataset, Mask-BN achieves a non-trivial provable robust accuracy of 32.3\% for the 1\% pixel patch. The provable robust accuracy increases to 54.8\% if we consider the top-5 classification task (more details for the top-k analysis are in Appendix~\ref{apx-topk}).

\noindent \textbf{\framework also maintains high clean accuracy.} As shown in Table~\ref{tab-huge-provable}, \framework retains high clean accuracy. For a 1\% pixel patch, Mask-BN has a 95.2\% clean accuracy on ImageNette and 55.1\% on ImageNet. Mask-DS also has a 92.3\% clean accuracy on ImageNette and 44.1\% on ImageNet. For a 2.4\% pixel patch on CIFAR-10, Mask-BN and Mask-DS have a high clean accuracy of 83.9\% and 84.6\%, respectively. In Table~\ref{tab-huge-vanilla}, we report the clean accuracy of ResNet and BagNet. We can see that the clean accuracy drop of Mask-BN and Mask-DS on ImageNette compared with undefended ResNet is within 7.5\%. The accuracy drop of Mask-BN from the undefended BagNet is within 1\%.\footnote{BagNet alone does not have any provable robustness but acts as a building block for the provable defense of \framework.}

We note that we use the optimal mask window sizes for different estimated upper bounds of patch sizes, and therefore the clean accuracy for different patches varies slightly in Table~\ref{tab-huge-provable}. We will show a similarly high performance of our defense when using an over-conservatively large mask window size in Section~\ref{sec-eval-detail-masking}.

\begin{table}[t]
    \centering
        \caption{Clean accuracy of ResNet and BagNet for different datasets}
   \resizebox{0.8\linewidth}{!}  {\scriptsize
     \begin{tabular}{c|c|c|c}
    \toprule
    Dataset & {ImageNette} & {ImageNet} & {CIFAR-10}\\
         \midrule
         ResNet &  99.6\% & 76.1\% &  97.0\%\\
        BagNet &  95.9\% &  56.5\% & 85.4\%  \\
    \bottomrule
    \end{tabular}}

    \label{tab-huge-vanilla}
\end{table}
\noindent\textbf{\framework achieves higher provable robust accuracy than all previous defenses.} We compare our defense performance with existing defenses across three datasets. 

\textit{Comparison with IBP~\cite{chiang2020certified}.} IBP is too computationally expensive and does not scale to high-resolution images like ImageNette and ImageNet. We thus only compare its performance with PatchGuard on CIFAR-10. As shown in Table~\ref{tab-huge-provable}, both Mask-BN and Mask-DS significantly outperform IBP in terms of provable robust accuracy and clean accuracy.

\textit{Comparison with CBN~\cite{zhang2020clipped}.} Table~\ref{tab-huge-provable} shows that both Mask-BN and Mask-DS have higher provable robust accuracy than CBN across three datasets. The clean accuracy of Mask-BN is higher or comparable with that of CBN, but its provable robust accuracy is much higher. For example, against a 3\% pixel patch on ImageNette, Mask-BN (94.8\%) has a similar clean accuracy as CBN (94.9\%), but its provable robust accuracy is 37.1\% higher! 

\textit{Comparison with DS~\cite{levine2020randomized}.} Both Mask-BN and Mask-DS have better defense performance than DS on the high-resolution ImageNette and ImageNet datasets. For example, against a 1\% pixel patch on ImageNet, Mask-BN has a 10.7\% higher clean accuracy and a 14.6\% higher provable robust accuracy compared with DS. On CIFAR-10, Mask-DS outperforms DS in terms of clean accuracy and provable robust accuracy thanks to the robust masking defense. 

\noindent\textbf{Takeaways.} Our evaluation shows the effectiveness of our proposed defenses, achieving state-of-the-art provable robustness on all three datasets. We find that BagNet-based defenses (Mask-BN and CBN) perform well on ImageNette and ImageNet but are fragile on CIFAR-10 due to the low image resolution. Meanwhile, De-randomized Smoothing based defenses (Mask-DS and DS) perform better on CIFAR-10. This shows that while the \textit{robust masking} defense always improves robustness, the choice of which model to use (Mask-BN or Mask-DS) depends on the dataset.

\subsection{Detailed Analysis of \framework}\label{sec-eval-detail}

In this subsection, we analyze the behavior of vanilla (undefended) models, \framework with different parameters, and defense efficiency on the ImageNette dataset. We will only report results for Mask-BN when the observations from Mask-BN and Mask-DS are very similar. A similar analysis for CIFAR-10 is available in Appendix~\ref{apx-cifar}.

\begin{table}[t]
    \centering
        \caption{Effect of logits clipping values on vanilla models}
   \resizebox{\linewidth}{!}{ \scriptsize
   \begin{tabular}{c|c|c|c|c|c}
    \toprule
    $(c_l,c_h)$& $(-\infty,\infty)$&$(0,\infty)$&$(0,50)$ & $(0,15)$ & $(0,5)$ \\
    \midrule
    ResNet-50 & 99.6\% & 99.5\% & 99.5\% & 99.5\% & 99.0\%\\
    BagNet-33 &97.2\% & 97.1\%& 97.0\% & 95.8\% & 94.1\%\\
    BagNet-17 & 95.9\% & 95.5\%& 94.7\% & 92.3\% & 87.9\%\\
    BagNet-9 &92.5\% & 92.5\% & 91.4\% & 85.4\% & 73.8\%\\

    \bottomrule
    \end{tabular}}
    \label{tab-clipping}
\end{table}

\begin{table}[t]
    \centering
        \caption{Invariance of BagNet-17 predictions to feature masking}
   \resizebox{\linewidth}{!}{ \scriptsize
   \begin{tabular}{c|c|c|c|c|c}
    \toprule
    Window size& 0$\times$0&2$\times$2&4$\times$4 & 6$\times$6 & 8$\times$8 \\
    \midrule
    Masked accuracy &95.9\% & 95.9\% & 95.9\% & 95.8\% & 95.7\%\\
    \% images   & 4.1\% & 5.1\% &6.1\% & 7.3\%& 8.5\%\\
    \% windows per image& 0\% & 0.05\% & 0.2\% & 0.4\% &0.7\%\\
    \bottomrule
    \end{tabular}}
    \label{tab-feature-mask}
\end{table}

\subsubsection{Analysis of Vanilla models}\label{sec-eval-detail-vanilla}
Recall that \framework's robust prediction relies on clipping feature values as well as robust masking. Here, we show that vanilla models only have a small performance loss due to clipping and feature masking, which explains the high clean accuracy retained by \framework.

\noindent\textbf{Clipping has a small impact on vanilla models.} In this analysis, we vary the clipping value for the local logits for ResNet and BagNet to determine how the clean accuracy changes, and the results are shown in Table~\ref{tab-clipping}. We find that clipping the negative values only slightly affects the clean accuracy ($c_l=0,c_h=\infty$ is our default setting). When we decrease the positive clipping value $c_h$, the clean accuracy of the model also decreases. We notice that models with smaller receptive fields are more sensitive to clipping. This is because models with small receptive fields only have a small number of correct local predictions. The corresponding correctly predicted local logits have to use large logits values to dominate the global prediction, which leads to the sensitivity to clipping. As shown in Figure~\ref{fig-hist-net}, the logits of the adversarial images tend to have large values. If we set $c_h$ to the largest clean logits value, we will not affect the clean accuracy and can improve the \textit{empirical} robustness against the adversarial patch.

\noindent\textbf{Vanilla models are generally prediction-invariant to feature masking.} In our \emph{robust masking} defense, we detect and mask corrupted features. If the model can make correct predictions from the aggregation of the remaining features, we can recover the correct prediction. We use BagNet-17, which has $26\cdot26$ local features, to analyze the prediction invariance of vanilla models to partial feature masking. We mask out all class evidence within a set of sliding windows of different sizes and record the prediction from the remaining features. We report the average accuracy over all possible masked feature tensors (masked accuracy), the percentage of images for which at least one masked prediction is incorrect (\% images), and the averaged percentage of masks that will cause prediction change for each image (\% windows per image).\footnote{We note that ``\% images" presented in Table~\ref{tab-feature-mask} is an upper bound for our robust masking in Algorithm~\ref{alg-masking} because robust masking masks the window with the highest class evidence for each class while this analysis only removed wrong class evidence within the same window as the true class.} As shown in Table~\ref{tab-feature-mask}, the overall average masked accuracy is high, and the percentage of images and windows for which the prediction changes is low. Such a small fraction of images with prediction changes enables us to achieve high provable robustness and maintain clean accuracy.

\begin{table}[t]
    \centering
        \caption{Effect of receptive field sizes on provable robust accuracy}
    \resizebox{\linewidth}{!}{\scriptsize
    \begin{tabular}{c|c|c|c|c|c|c}
    \toprule
          Patch size &  \multicolumn{2}{c|}{1\% pixels} &\multicolumn{2}{c|}{2\% pixels} & \multicolumn{2}{c}{3\% pixels}  \\
         \midrule
     Accuracy    & clean &robust& clean &robust& clean &robust\\
         \midrule
     Mask-BN-33 & 96.5\% & 88.9\% & 96.3\% & 86.0\% & 96.3\% &82.1\%\\
     Mask-BN-17 & {95.2}\% &{89.0}\%& {95.0}\%& {86.7}\% & 94.8\%& {83.0}\%\\
     Mask-BN-9 & 92.1\%  & 85.5\% &91.8\% & 82.8\% &91.5\% & 79.8\%\\
     \bottomrule
    \end{tabular}}
    \label{tab-masking-provable}
\end{table}

\subsubsection{\framework with Different Parameters}\label{sec-eval-detail-masking}
\noindent\textbf{The receptive field size balances the trade-off between clean accuracy and provable robust accuracy of defended models.} We report clean accuracy and provable robust accuracy of our defense with BagNet-33, BagNet-17, and BagNet-9, which have a receptive field of 33$\times$33, 17$\times$17, and 9$\times$9, respectively, against different patch sizes in Table~\ref{tab-masking-provable}. As shown in the table, a model with a larger receptive field has better clean accuracy. However, a larger receptive field results in a larger fraction of corrupted features and thus a larger gap between clean accuracy and provable robust accuracy. We can see that though Mask-BN-33 has a higher clean accuracy than Mask-BN-17, its gap between clean accuracy and provable robust accuracy is larger, which results in a similar or slightly poorer provable robust accuracy compared with Mask-BN-17. \emph{The trade-off between the clean accuracy and the robustness can be tuned with different receptive field sizes and should be carefully balanced when deploying the defense.}

\begin{table}[t]
    \centering
        \caption{Effect of detection thresholds on Mask-BN-17}   
   \resizebox{\linewidth}{!} { \scriptsize
   \begin{tabular}{c|c|c|c}
    \toprule
          & Clean accuracy & Provable accuracy & Detection FP  \\
         \midrule
     T-0.0 & 95.0\% & 86.7\%& 100\%    \\
     T-0.2 & 94.2\%  &79.9\%  & 22.9\%   \\
     T-0.4 & 95.3\%  &68.0\% & 0.7\%  \\
     T-0.6 &  95.5\%   &38.7\% & 0.05\% \\
     T-0.8 & 95.5\% & 6.2\%&  0\%   \\
     T-1.0 & 95.5\% & 0\% &0\%   \\
     \bottomrule
    \end{tabular}}
 \label{tab-masking-thres}
\end{table}

\noindent\textbf{A large detection threshold improves clean accuracy but decreases provable robust accuracy of defended models.} We study the model performance of BagNet-17 against a 2\% pixel patch as we change the detection threshold $T$ from $0.0$ to $1.0$. A threshold of zero means our detection will always return a suspicious window even if the input is a clean image while a threshold of one means no detection at all. We report the clean accuracy, provable robust accuracy, and false positive (FP) rates for detection of suspicious windows on clean images in Table~\ref{tab-masking-thres}. As we increase the detection threshold $T$, we reduce the FP rate for clean images, at the cost of making it easier for an adversarial patch to succeed via \textit{Case IV} (no suspicious window detected). \emph{However, we note that false positives in the detection phase for clean images have a minimal impact on the clean accuracy because our models are generally invariant to feature masking, as already shown in Table~\ref{tab-feature-mask}.} Thus, we find $T=0$ to be the best choice for this dataset (even with an FP of 100\%); it results in the highest provable robust accuracy of 86.7\% while only incurring a 0.5\% clean accuracy drop compared to $T=1$.

\begin{table}[t]
    \centering
        \caption{Effect of feature types on Mask-BN-17}
   \resizebox{\linewidth}{!}  {\scriptsize
    \begin{tabular}{c|c|c|c|c|c|c}
    \toprule
      Patch size     &  \multicolumn{2}{c|}{1\% pixels} &\multicolumn{2}{c|}{2\% pixels} & \multicolumn{2}{c}{3\% pixels}  \\
         \midrule
      Accuracy   & clean &robust& clean &robust& clean &robust\\
         \midrule
     Logits&  {95.2}\% &{89.0}\%& {95.0}\%& {86.7}\% & 94.8\%& {83.0}\%\\
     Confidence & 87.9\% & 80.5\% & 87.9\% & 77.9\% & 88.0\% & 74.4\%\\
     Prediction & 85.7\% & 77.3\% & 85.8\% & 74.1\% & 85.9\% & 70.3\%\\
     \bottomrule
    \end{tabular}}
    \label{tab-feature-bn}
\end{table}
\begin{table}[t]
    \centering
        \caption{Effect of feature types on Mask-DS}
  \resizebox{\linewidth}{!} {\scriptsize
    \begin{tabular}{c|c|c|c|c|c|c}
    \toprule
      Patch size     &  \multicolumn{2}{c|}{1\% pixels} &\multicolumn{2}{c|}{2\% pixels} & \multicolumn{2}{c}{3\% pixels}  \\
         \midrule
     Accuracy    & clean &robust& clean &robust& clean &robust\\
         \midrule
       
     Logits&92.4\%& 76.9\% & 92.1\% & 68.9\% & 91.9\% & 61.6\% \\
     Confidence & 92.3\% & 83.1\% &  92.1\% & 79.9\% & 92.1\% & 76.8\% \\
     Prediction & 91.9\% & 82.5\% & 91.8\% & 79.4\% & 91.7\%& 76.4\%\\
     \bottomrule
    \end{tabular}}
    \label{tab-feature-ds}
\end{table}

\begin{table}[t]
    \centering
        \caption{Effect of over-conservatively large masks on Mask-BN-17}
    \resizebox{\linewidth}{!} {\scriptsize
    \begin{tabular}{c|c|c|c|c}
    \toprule
     \backslashbox{mask}{patch}     & clean &  1\% pixels &2\% pixels & 3\% pixels  \\
         \midrule
    1\% pixels& 95.2\% & {89.0\% }& -- & -- \\ 
     2\% pixels & 95.0\%&88.2\% & {86.7\%}&-- \\
     3\% pixels & 94.8\% & 87.1\% & 85.3\% & {83.0\%}\\
     4.5\% pixels & 94.6\%& 86.0\% &  84.1\% & 81.8\%\\
     \midrule
         CBN~\cite{zhang2020clipped}&{94.9\%} &74.6\%& 60.9\% &  45.9\% \\
    DS~\cite{levine2020randomized}& 92.1\%& 82.3\%&79.1\%&  75.7\%  \\
     \bottomrule
    \end{tabular}}
    \label{tab-mismatch}
\end{table}

\begin{table}[t]
    \centering
        \caption{Per-image inference time of different models}
\resizebox{\linewidth}{!}{\scriptsize
    \begin{tabular}{c|c|c|c|c|c}
    \toprule
 Model & ResNet-50 & BagNet-17& DS-25-ResNet & Mask-BN & Mask-DS\\
  \midrule
Time  & 11.8ms & 12.1ms &387.9ms & 16.6ms&  404.4ms\\
     \bottomrule
    \end{tabular}}
    \label{tab-time}
\end{table}
\noindent\textbf{Different feature types greatly influence the performance of defended models.} In this analysis, we study the performance of the robust masking defense when using different types of features, namely logits, confidence values, and predictions. The results for Mask-BN-17 with different features are reported in Table~\ref{tab-feature-bn}. As shown in the table, using logits as the feature type has much better performance than confidence and prediction in terms of clean accuracy and provable accuracy. The main reason for this observation is that BagNet is trained with logits aggregation. Our additional analysis shows that BagNet does not have high model performance when trained with confidence or prediction aggregation; therefore, we use logits as our default feature type for Mask-BN. Interestingly, Mask-DS exhibits a different behavior. As shown in Table~\ref{tab-feature-ds}, Mask-DS works better when we use prediction or confidence as feature types due to its different training objectives. In conclusion, the performance of different feature types largely depends on the training objective of the network with small receptive fields, and should be appropriately optimized to determine the best defense setting. 

\noindent\textbf{Over-conservatively large masks only have a small impact on defended models.} \framework's robust masking is deployed in a manner that is agnostic to the patch size by selecting a large mask window size that matches the upper bound of the patch size. In this analysis, we study the model performance when an over-conservatively large mask is used. Note that the provable robustness obtained with a larger mask for a larger patch can be directly applied to a smaller patch (e.g., an image that is robust against a 3\% pixel patch is also robust against a 1\% pixel patch). However, we can certify the robustness for more images when the actual patch size is smaller than the mask size (Appendix~\ref{apx-mismatch}). 

We report the provable robust accuracy and clean accuracy of Mask-BN-17 with different patch sizes and attack-agnostic mask sizes in Table~\ref{tab-mismatch}. First, robust masking with a larger mask can have a tighter provable robustness bound for a smaller patch. For example, when using a 3\% pixel mask, the provable analysis in Algorithm~\ref{alg-provable-masking} (using Lemma~\ref{lemma}) can only certify the robustness of 83.0\% of test images for any patch size smaller than 3\%. In contrast, the tighter provable analysis from Appendix~\ref{apx-mismatch} leads to a provable robust accuracy of 87.1\% (4.1\% improvement) for a 1\% pixel patch. Second, over-conservatively using a larger mask size only leads to a slight drop in clean accuracy and provable robust accuracy. As we increase the mask size, the clean accuracy for 1\% pixel patch only drops from 95.2\% to 94.6\% and the provable robust accuracy drops from 89.0\% to 86.0\%. We note that even when the mismatch is large (a 4.5\% pixel mask for a 1\% pixel patch), our defense still outperforms DS~\cite{levine2020randomized}.

\subsubsection{Defense Efficiency}\label{sec-eval-detail-efficiency}
\noindent \textbf{Robust masking only introduces a small defense overhead.}
In Table~\ref{tab-time}, we report the per-image inference time of different models on the ImageNette validation set. As shown in the table, the inference time of Mask-BN (16.6ms) is close to that of BagNet-17 (12.1ms). We have a similar observation for Mask-DS (404.4ms) and DS-25-ResNet (387.9ms).

\noindent \textbf{BagNet-like models (e.g., Mask-BN) are more efficient than DS-like models (e.g., DS and Mask-DS).} As discussed in Section~\ref{sec-small-field}, using an ensemble model (e.g., DS-ResNet) is computationally expensive compared with using small convolution kernels in conventional CNNs (e.g., BagNet). From Table~\ref{tab-time}, we can see the inference time of BagNet-17 (12.1ms) much smaller than that of DS-25-ResNet (387.9ms). This difference leads to a huge efficiency gap between Mask-BN (16.6ms) and Mask-DS (404.4ms) as well as DS (387.9ms). Therefore, we suggest using small convolution kernels to build models with small receptive fields when the two approaches have similar defense performance. 

\section{Discussion}\label{sec-discussion}
In this section, we will show that \framework is a generalization of other provable defenses and discuss its limitations and future directions.

\subsection{Generalization of Related Defenses}\label{sec-generalization-cbn-ds}
In this subsection, we will show that our defense framework is a generalization of other provably robust defenses such as Clipped BagNet~\cite{zhang2020clipped}, De-randomized Smoothing~\cite{levine2020randomized}.

\noindent\textbf{Clipped BagNet (CBN).} CBN~\cite{zhang2020clipped} proposes clipping the local logits tensor with function $\textsc{Clip}(\mathbf{u})=\tanh(0.05\cdot \mathbf{u} - 1)$ to improve the robustness of BagNet~\cite{brendel2019approximating}. 
Since the range of $\tanh(\cdot)$ is bounded by $(-1,1)$, the adversary can achieve at most $2k$ difference in clipped logits values between the true class and any other class, where $k$ is the number of corrupted local logits due to the adversarial patch. In its provable analysis, CBN calculates the difference between the sum of unaffected logits values for the predicted class and the second predicted class as $\delta$; if $\delta>2 k$, CBN certifies the robustness of the input clean image. To reduce our Mask-BN defense to CBN, we can set our feature type to logits, the detection threshold to $T=1$ (i.e., no detection), and adjust the clipping values $c_l$ and $c_h$ or the clipping function $\textsc{Clip}(\cdot)$. Our evaluation shows that our defense significantly outperforms CBN across three different datasets. There are two major reasons for this performance difference: 1) CBN \emph{retains} the malicious feature values while \framework \emph{detects and masks them}; 2) CBN uses conventional training while \framework uses provable adversarial training (Appendix~\ref{apx-training}).

\noindent\textbf{De-randomized Smoothing (DS).} DS~\cite{levine2020randomized} trains a `smoothed' classifier on image pixel patches and computes the predicted class as the class with the majority vote among local predictions made from all pixel patches. The provable robustness analysis of DS only considers the largest and second-largest counts of local predictions. If the gap between the two largest counts is larger than $2 k$, where $k$ is the upper bound of the number of corrupted predictions, DS certifies the robustness of the image.
When we set the feature type to prediction and detection threshold to $T=1$ (i.e., no detection), we can reduce Mask-DS to DS. Note that averaging all one-hot encoded local predictions gives the same global prediction as majority voting. The major cause of the relatively poor performance of DS is that its certification process discards the spatial information of each prediction while our robust masking defense utilizes the spatial information that all corrupted features are within a small window in the feature space. We provide a more detailed comparison in Appendix~\ref{apx-generalization}.

We note that two defenses (BagCert~\cite{metzen2021efficient} and Randomized Cropping~\cite{lin2021certified}) appeared after the initial release of our paper preprint~\cite{xiang2020patchguard}; both of them can be regarded as instances of our \framework framework, i.e., using CNNs with small receptive fields (modified BagNet~\cite{metzen2021efficient}; image cropping~\cite{lin2021certified}) and secure aggregation (majority voting\cite{metzen2021efficient,lin2021certified}). These two followup works further demonstrate the generality of \framework.

\subsection{Limitations and Future Work}\label{sec-future}
While \framework achieves state-of-the-art provable robustness and has higher or comparable clean accuracy compared with previous defenses, there is still a drop in clean accuracy compared with undefended models. We note that \framework is compatible with any small-receptive-field CNN and secure aggregation mechanism, and we expect the trade-off between provable robustness and clean accuracy to be mitigated further given any progress in these two directions.

\noindent\textbf{CNNs with small receptive fields.} The use of small receptive fields provides substantial provable robustness but incurs a non-negligible clean accuracy drop for the two architectures (i.e., BagNet~\cite{brendel2019approximating} and DS-ResNet~\cite{levine2020randomized}) used in this paper. In future work, we aim to explore better architectures and training methods for CNNs with small receptive fields in order to provide robustness against patch attacks while maintaining state-of-the-art clean accuracy. Any progress on this front will directly boost our defense performance since \framework is compatible with any CNN with small receptive fields.


\noindent\textbf{Secure feature aggregation.} We present \emph{robust masking} to compute robust predictions from partially corrupted features. Robust masking works in a manner that is agnostic to the patch size by using a large mask, but a completely parameter-free defense may be more desirable. To this end, we observe that \framework turns the problem of designing an adversarial patch defense into a robust aggregation problem, i.e., \emph{how can we make a robust prediction from a partially corrupted feature tensor?} Thus, techniques from robust statistics such as median, truncated mean, as well as differential privacy~\cite{dwork2014algorithmic} can also be incorporated in our framework, some of which admit a parameter-free defense. We also plan to explore the design of custom secure aggregation mechanisms in future work that can further improve provable robustness.

\section{Related Work}\label{sec-related-work}

\subsection{Localized Adversarial Perturbations}\label{related-work-attack}
Most adversarial example research focuses on global $L_p$-norm bounded perturbations while localized adversaries have received much less attention. The adversarial patch attack was introduced by Brown et
al. \cite{brown2017adversarial} and focused on physical and universal patches to induce targeted misclassification. Attacks in the real-world can be realized by attaching a patch to the victim object.  A follow-up paper on Localized and Visible Adversarial Noise (LaVAN) attack~\cite{karmon2018lavan} aimed at inducing targeted misclassification in the digital domain.  

Localized patch attacks against object detection \cite{liu2018dpatch,thys2019fooling}, semantic segmentation models \cite{sehwag2018not} as well as training-time poisoning attacks using localized triggers \cite{gu2017badnets,liu2017trojaning} have been proposed. Our threat model in this paper focuses on attacks against image classification models at test time; how to generalize our defense to the above settings can be an interesting future direction to study. 


\subsection{Adversarial Patch Defenses}

Empirical defenses like Digital Watermark (DW)~\cite{hayes2018visible} and Local Gradient Smoothing (LGS)~\cite{naseer2019local} were first proposed to detect and neutralize adversarial patch. However, these heuristic defenses are vulnerable to adaptive attackers with knowledge of the defense.

Observing the ineffectiveness of DW and LGS, Chiang et al.~\cite{chiang2020certified} proposed the first provable defense against adversarial patches via Interval Bound Propagation (IBP)~\cite{gowal2018effectiveness,mirman2018differentiable}. Despite its important theoretical contribution, the IBP defense has poor clean and provable robust accuracy, as shown in Table~\ref{tab-huge-provable}. Zhang et al.~\cite{zhang2020clipped} proposed clipped BagNet (CBN) for provable robustness and Levine et al.~\cite{levine2020randomized} proposed building a `smoothed' classifier (DS) that outputs the class with the largest count from local predictions on all small pixel patches. We have shown that CBN and DS are instances of our general defense framework (Section~\ref{sec-generalization-cbn-ds}), and \framework has better performance due to the use of robust masking (Section~\ref{sec-eval-provable}). The Minority Report (MR)~\cite{mccoyd2020minority} defense was proposed in concurrent work, where the defender puts a mask at all possible locations and extracts patterns from model predictions. This defense can only provably detect an attack while \framework also guarantees the recovery of the correct prediction. Moreover, MR performs  masking in the image space which is computationally expensive and cannot scale to high-resolution images. However, if we can tolerate attack detection, MR has an advantage on low-resolution images (90.6\% clean accuracy and 62.1\% provable accuracy for 2.4\%-pixel patch on CIFAR-10; compared to our 84.6\% clean accuracy and 57.7\% provable accuracy). How to extend \framework for attack detection is an interesting direction of future work.

Another concurrent line of research has been on adversarial patch training~\cite{wu2019defending,rao2020adversarial}. However, these works focus on empirical robustness and do not provide any provable guarantees.

\subsection{Receptive Fields of CNNs}
A number of papers have studied the influence of the receptive field \cite{brendel2019approximating,araujo2019computing,le2017receptive,luo2016understanding} on model performance in order to better understand the model behavior. BagNet \cite{brendel2019approximating} adopted the structure of ResNet-50 \cite{he2016deep} but reduced the receptive field size by replacing 3$\times$3 kernels with 1$\times$1 kernels. BagNet-17 can achieve similar top-5 validation accuracy as AlexNet~\cite{krizhevsky2012imagenet} on ImageNet~\cite{deng2009imagenet} dataset when each feature only looks at a 17$\times$17 pixel region. The small receptive field was used for better interpretability of model decisions in the original BagNet paper. In this work, we use the reduced receptive field size to create models robust to adversarial patch attacks.

\subsection{Other Adversarial Example Attacks and Defenses}
The development of adversarial example-based attacks and defenses has been an extremely active research area over the past few years. Conventional adversarial attacks~\cite{szegedy2013intriguing,goodfellow2014explaining,papernot2016limitations,carlini2017towards} craft adversarial examples that have a small $L_p$ distance to clean examples but induce model misclassification. Many empirical defenses~\cite{papernot2016distillation,xu2017feature,meng2017magnet,metzen2017detecting} have been proposed to address the adversarial example vulnerability, but most of them can be easily bypassed by strong adaptive attackers~\cite{carlini2017adversarial,athalye2018obfuscated,tramer2020adaptive}. The fragility of the empirical defenses has inspired provable or certified defenses~\cite{raghunathan2018certified,wong2017provable,lecuyer2019certified,cohen2019certified,salman2019provably,gowal2018effectiveness,mirman2018differentiable} as well as work on learning-theoretic bounds in the presence of adversaries \cite{bhagoji2019lower,cullina2018pac,pmlr-v97-dohmatob19a,pmlr-v97-yin19b,schmidt2018adversarially}. In contrast, the focus of this paper is on localized adversarial patch attacks, and we refer interested readers to survey papers~\cite{papernot2018sok,yuan2019adversarial} for a more detailed background on adversarial examples. 
\section{Conclusion}
In this paper, we propose a general provable defense framework called \framework that mitigates localized adversarial patch attacks. We identify large receptive fields and insecure aggregation mechanisms in conventional CNNs as the key sources of vulnerability to adversarial patches. To address these two problems, our defense proposes the use of models with small receptive fields to limit the number of features corrupted by the adversary which are then augmented with a \emph{robust masking} defense to detect and mask the corrupted features to ensure secure feature aggregation. Our defense achieves state-of-the-art provable robust accuracy on ImageNet, ImageNette, and CIFAR-10 datasets. We hope that our general defense framework inspires further research to fully mitigate adversarial patch attacks.

\section*{Acknowledgements}
We are grateful to David Wagner for shepherding the paper and anonymous reviewers at USENIX Security for their valuable feedback.
This work was supported in part by the National Science Foundation under grants CNS-1553437 and CNS-1704105, the ARL’s Army Artificial Intelligence Innovation Institute (A2I2), the Office of Naval Research Young Investigator Award, the Army Research Office Young Investigator Prize, Faculty research award from Facebook, Schmidt DataX award, and Princeton E-ffiliates Award.

\bibliographystyle{plain}
\bibliography{patch_defense}

\newpage

\appendix

\section{Empirical Adversarial Patch Attacks} \label{apx-attack}
We provide additional details of empirical adversarial patch attacks in this section. Note that we only use the empirical attacks to evaluate the \textit{undefended} models; in contrast, our defended models are evaluated via provable robustness metrics.

\noindent \textbf{Attack algorithm.} The untargeted adversarial patch attack can be formulated as the following optimization problem, as done in related literature~\cite{brown2017adversarial,karmon2018lavan}.
\begin{equation}\label{eqn-untargeted}
    \mathbf{x}^\prime = \arg\max_{\mathbf{x}^\prime \in \mathcal{A}(\mathbf{x})} \mathcal{L}(\mathcal{M}(\mathbf{x}^\prime),y)
\end{equation}
Here we abuse the notation to let $\mathcal{M}(\mathbf{x}^\prime)$ to be the final predicted confidence vector of the model, and $y$ to be the one-hot encoded vector of the class. $\mathcal{L(\cdot)}$ refers to the cross-entropy loss. For the targeted attack with target class $y^\prime \neq y$, the optimization objective is slightly different as:
\begin{equation}
    \mathbf{x}^\prime = \arg\min_{\mathbf{x}^\prime \in \mathcal{A}(\mathbf{x})} \mathcal{L}(\mathcal{M}(\mathbf{x}^\prime),y^\prime)
\end{equation}
Since we have $\mathbf{x}^\prime = (\mathbf{1}-\mathbf{p})\odot \mathbf{x} + \mathbf{p} \odot \mathbf{x}^{\prime\prime}$, and $\mathbf{p},\mathbf{x}$ are fixed, the actual optimization is over $\mathbf{x}^{\prime\prime}$, which is distinguished from conventional $L_p$ adversary optimization~\cite{carlini2017towards,madry2017towards}. This optimization problem can be approximately solved with gradient-based optimization algorithms such as Projected Gradient Descent~\cite{madry2017towards}.

\noindent\textbf{Attack setup.} We use untargeted LaVAN attack (formulated in Equation~\ref{eqn-untargeted}) for our case study in Section~\ref{sec-motivation}. Due to the computational constraint, we only select five random locations to perform the attack against each image. For each image and each random location, we use a 500-step Projected Gradient Descent~\cite{madry2017towards} with a learning rate of 0.05 to generate the adversarial patch. We record and report the best attack result for each image. As shown in Section~\ref{sec-motivation}, the untargeted LaVAN attack with five random locations are already effective enough against the undefended vanilla models, though the attack performance can be further improved if the adversary tries all possible locations. We note that our provable analysis considers all possible patch locations.  

\section{Details of Experiment Setup}\label{apx-setup}

In this section, we provide additional details for datasets and model training. 

\subsection{Datasets}\label{apx-dataset}
\noindent\textbf{ImageNet.} ImageNet~\cite{deng2009imagenet} is a popular benchmark dataset for high-resolution image classification. It has 1281167 training images and 50000 validation images from 1000 classes organized according to the WordNet~\cite{wordnet} hierarchy. It is conventional to resize and crop ImageNet images to 224$\times$224 or 299$\times$299. For ResNet, we take 224$\times$224 images as inputs and use the pre-trained model from \cite{pretrained}. For BagNet, we take 224$\times$224 inputs and fine-tune pre-trained models from \cite{bagnet}. For DS-ResNet-50, we use 299$\times$299 images and use pre-trained models from \cite{ds}. The actual input of the base classifier of DS-ResNet-50 is a pixel patch in the shape of 25x299; DS-ResNet-50 counts the predictions from all possible 25$\times$299 pixel patches and predicts the class with the largest count. When the confidence of local prediction is lower than $0.2$, the base model will abstain from prediction. 

\noindent\textbf{ImageNette.} ImageNette~\cite{imagenette} is a subset of ImageNet with 9469 training images and 3925 validation images from classes of \texttt{tench}, \texttt{English springer}, \texttt{cassette player}, \texttt{chain saw}, \texttt{church}, \texttt{French horn}, \texttt{garbage truck}, \texttt{gas pump}, \texttt{golf ball}, \texttt{parachute}. We use this smaller dataset for a more comprehensive defense evaluation. We use the same model architectures as the models for ImageNet, but modify the last fully-connected layer (i.e., the classification layer) to accommodate for 10-class classification. We use the pre-trained models~\cite{pretrained,bagnet,ds}, retrain the entire model for 20 epochs with a batch size of 64, and retain the model with the highest validation accuracy. We use Stochastic Gradient Descent (SGD) with a 0.001 initial learning rate and a 0.9 momentum for model training. 

\noindent\textbf{CIFAR-10.} CIFAR-10~\cite{krizhevsky2009learning} is a benchmark dataset for low-resolution image classification. CIFAR-10 has 50000 training images and 10000 test images from classes \texttt{plane}, \texttt{car}, \texttt{bird}, \texttt{cat}, \texttt{deer}, \texttt{dog}, \texttt{frog}, \texttt{horse}, \texttt{ship}, \texttt{truck}. Each image is in the shape of 32$\times$32. For ResNet, we train the model from scratch for 20 epochs. We use SGD with a 0.01 initial learning rate, a 0.9 momentum, and a 5e-4 weight decay. We reduce the learning rate with a factor of 0.1 every 10 epochs. For BagNet, we re-scale the 32$\times$32 images to 192$\times$192 with bicubic interpolation because we find BagNet does not have good performance on low-resolution images. We use the models~\cite{bagnet} pre-trained on ImageNet and retrain the entire model with the same hyperparameters as we retrain a BagNet. For DS-ResNet-18, we take a 4$\times$32 pixel band as the input to its base classifier and use the pre-trained model from \cite{ds}. Note that DS-ResNet-18 is based on the structure of ResNet-18 while other models are based on ResNet-50. The reported clean accuracy of ResNet-50 in Table~\ref{tab-huge-vanilla} is obtained from ResNet-50 retrained on 192$\times$192 images, facilitating comparison with the clean accuracy of BagNet-17 trained on re-scaled images.

\begin{table}[t]
    \centering
        \caption{Effect of provable adversarial training on Mask-BN-17} 
  \resizebox{\linewidth}{!} { \scriptsize
    \begin{tabular}{c|c|c|c|c|c|c}
    \toprule
          Dataset &  \multicolumn{2}{c|}{ImageNet} &\multicolumn{2}{c|}{ImageNette}  &\multicolumn{2}{c}{CIFAR-10}  \\
         \midrule
     Accuracy    & clean &robust& clean &robust& clean &robust \\
         \midrule

     Conventional training &54.4\% & 13.3\% & 93.9\% & 83.8\% & 82.6\% & 31.7\%\\
     Provable adv. training & 54.6\% & 26.0\% & 95.0\% & 86.7\% & 83.9\% & 47.3\%\\

     \bottomrule
    \end{tabular}}
 \label{tab-adv-training}
\end{table}

\subsection{Model Training}\label{apx-training}

\noindent\textbf{BagNet.} BagNet~\cite{brendel2019approximating} has a similar architecture as ResNet~\cite{he2016deep} but uses many 1$\times$1 convolution kernels to reduce the receptive field size. Therefore, BagNet can be trained conventionally with the cross-entropy loss. However, to improve the provable robust accuracy, \framework trains a BagNet with a mask over the region with the largest true class evidence. This training mimics the procedure of our provable analysis in Algorithm~\ref{alg-provable-masking}, and we call it \textit{provable adversarial training}. For each BagNet used in \framework, we first train the model in the conventional manner as introduced in Appendix~\ref{apx-dataset} and then do provable adversarial training with a 6$\times$6 feature masks for additional 20 epochs. In Table~\ref{tab-adv-training}, we report the results for Mask-BN-17 with and without provable adversarial training against a 2\% pixel patch on ImageNet/ImageNette and a 2.4\% pixel patch on CIFAR-10. We can see from the table that provable adversarial training significantly improves provable robustness.

\noindent\textbf{DS-ResNet.} DS-ResNet~\cite{levine2020randomized} takes all possible pixel bands as the inputs to its base model, and this feedforward inference is too expensive to perform during the model training. Therefore, following Levine et al.~\cite{levine2020randomized}, we train the base model with a subset of possible pixel bands for efficiency. Since only a fraction of local features are calculated during the training, the provable adversarial training used for BagNet is inapplicable to DS-ResNet.

\subsection{Hardware and Software}
All of our experiments are done on a workstation with 32 Intel Xeon E5-2620 v4 CPU cores, 252GB RAM, and 8 NVIDIA Tesla P100 GPU. Timing results are obtained using one GPU and one CPU core. The deep learning models are implemented in PyTorch~\cite{pytorch}.

\section{Additional Results on Local Logits Analysis of Adversarial Images}\label{apx-local-attack}

In section~\ref{sec-motivation}, we show that the adversarial patches tend to create abnormally large local feature values to dominate the global prediction. In Figure~\ref{fig-hist-nette} and Figure~\ref{fig-hist-cifar}, we provide additional results for ImageNette and CIFAR-10. We can see that these results are similar to 
our observation for ImageNet. These results motivate our \emph{robust masking} defense for a secure feature aggregation.

\begin{figure}[t]
    \centering
    \includegraphics[width=0.7\linewidth]{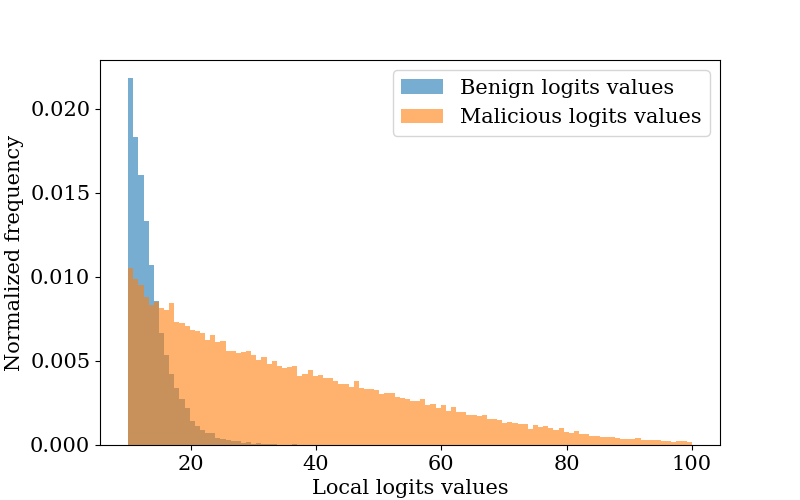}
    \caption{Histogram of large benign and malicious local logits values for ImageNette adversarial images.}
    \label{fig-hist-nette}
\end{figure}

\begin{figure}[t]
    \centering
    \includegraphics[width=0.7\linewidth]{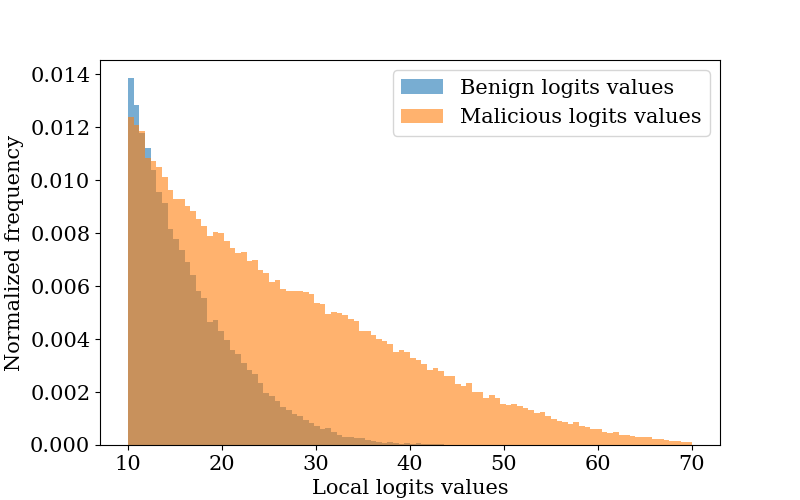}
    \caption{Histogram of large benign and malicious local logits values for CIFAR-10 adversarial images.}
    \label{fig-hist-cifar}
\end{figure}

\begin{figure}[b]
    \centering
    \includegraphics[width=\linewidth]{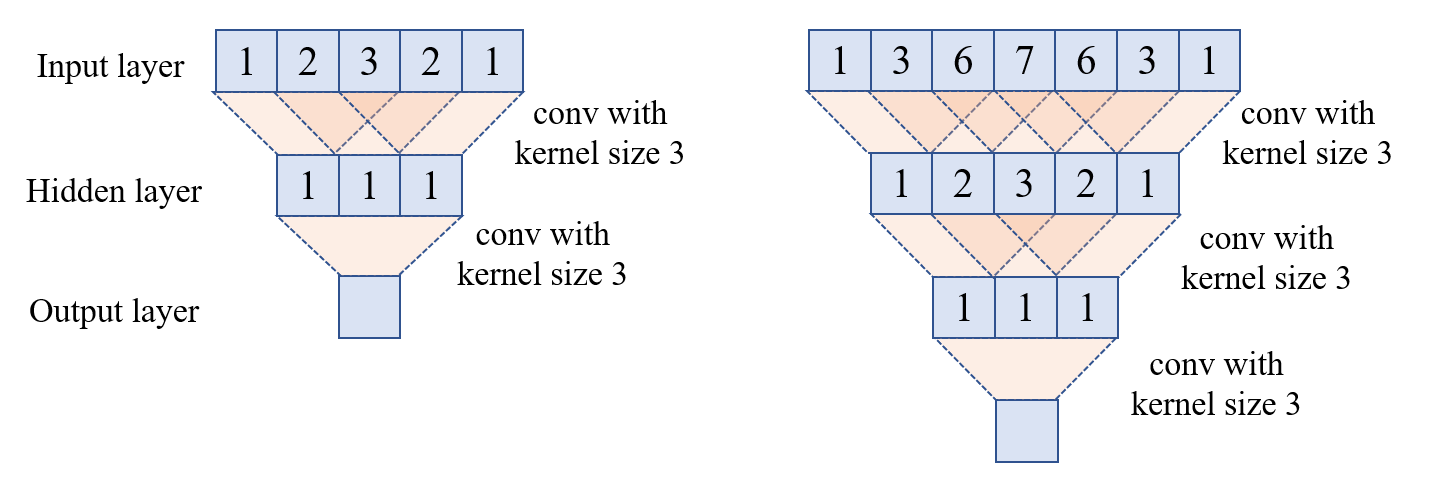}
    \caption{Toy example of 1-D convolution computation}
    \label{fig-center}
\end{figure}

\section{Details of Receptive Fields}\label{apx-receptive}
\noindent\textbf{Local features focus on the center of the receptive field.} In Section~\ref{sec-motivation}, we mentioned that a particular local feature focuses exponentially more on the center of its receptive field. We provide the intuition for this argument in Figure~\ref{fig-center}. The left part of the figure illustrates a 1-D example of convolution computation in which the input has five cells and will go through two convolution layers with a kernel size of 3 to compute the final output. Each cell in the hidden layer (i.e., the output of the first convolution layer) looks at 3 input cells, and the output cell looks at three hidden cells. We count the number of times each cell is looked at when computing the output cell and plot it in the figure. As we can see, the center cell of the input layer receives the most attention (being looked at  3 times). Moreover, as the number of layers increases (a similar example for 3 convolution layers is plotted in the right part of Figure~\ref{fig-center}), the difference in attention between the center cell and the rightmost/leftmost cell will increase exponentially. Therefore, a particular feature focuses exponentially more on the center of its receptive field, and an adversary controlling the center cell will have a larger capacity to manipulate the final output features.

\begin{figure}[t]
    \centering
    \includegraphics[width=\linewidth]{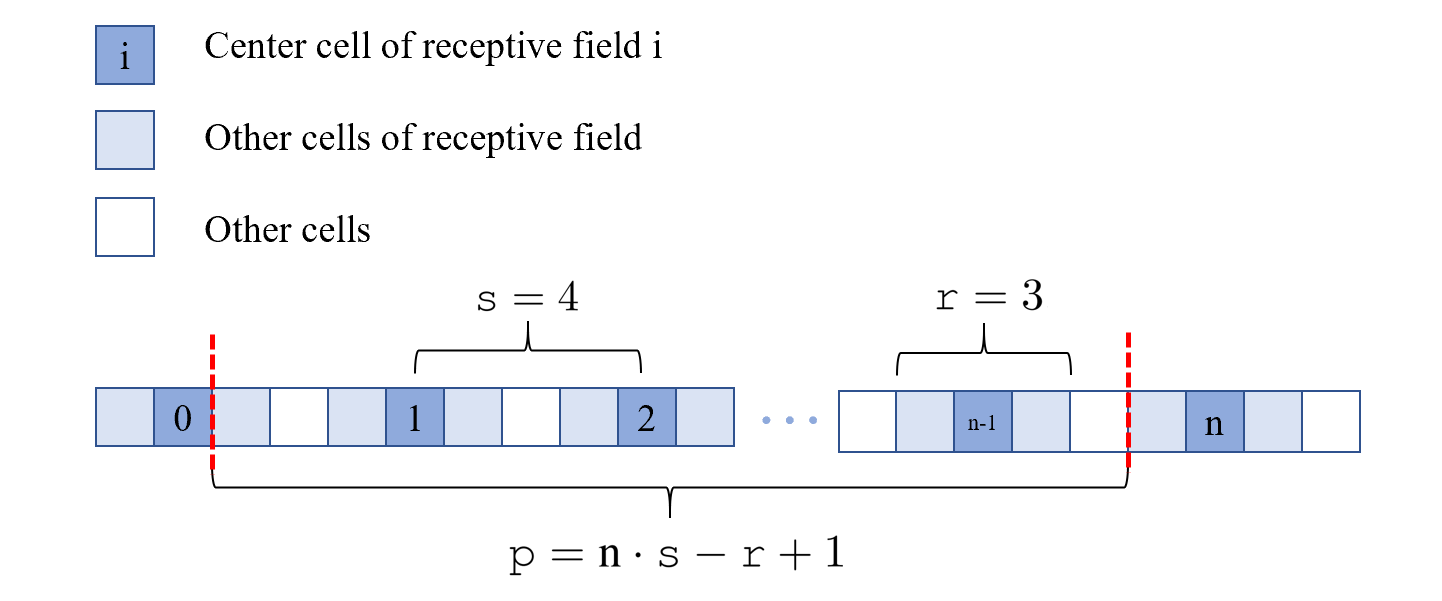}
    \caption{Example of computing window size.}
    \label{fig-window}
\end{figure}

\noindent\textbf{Computing the Window Size.} One crucial step of our robust masking defense is to determine the window size, and we show in Section~\ref{sec-masking} and Equation~\ref{eq-receptive} that the window size \texttt{w} can be computed as $\texttt{w} = \lceil(\texttt{p}+\texttt{r}-1)/\texttt{s}\rceil$, where \texttt{p} is the upper bound on the patch size, \texttt{r} is the size of receptive field, and \texttt{s} is the stride of receptive field. In Figure~\ref{fig-window}, we provide the intuition for Equation~\ref{eq-receptive}. In this example, we assume the stride $\texttt{s}=4$, and the size of the receptive field $\texttt{r}=3$. We distinguish the centers of receptive fields, the other cells in the receptive fields, and the other cells with different colors. Note that we choose a large stride \texttt{s} such that adjacent receptive fields do not overlap for a better visual demonstration; the derived equation is applicable to smaller \texttt{s} or larger \texttt{r}. In Figure~\ref{fig-window}, we want to determine the largest patch size \texttt{p} such that the patch only appears in n but not n+1 receptive fields. We plot the boundary of the largest patch with red dash line in the figure. The left part of the patch covers the rightmost cells of receptive field 0, and the right part does not appear in receptive field n. Based on Figure~\ref{fig-window}, we can compute $\texttt{p}=\text{n}\cdot \texttt{s}-\texttt{r}+1$. Next, we can substitute n with \texttt{w}, use $\lceil\cdot\rceil$ for generalization to any patch size, and finally get $\texttt{w} = \lceil(\texttt{p}+\texttt{r}-1)/\texttt{s}\rceil$. We note that the network architectures~\cite{bagnet,ds} used in this paper have $\texttt{s}=8$ for BagNet and $\texttt{s}=1$ for DS-ResNet.

\section{Tighter Provable Analysis for Over-conservative Mask Size}\label{apx-mismatch}

Recall that the mask window size is a tunable security parameter. In this section, we discuss a tighter provable analysis of robust masking when the defender overestimates the worst-case patch size and use a larger mask window size.

\noindent\textbf{Overestimation.} The provable robustness obtained for a large patch with a large mask is directly applicable to a small patch since the small patch is covered by the large patch. However, we can obtain a tighter bound for the scenario of smaller patch sizes.

As introduced in Section~\ref{sec-provable}, Algorithm~\ref{alg-provable-masking} compares the upper bound of class evidence of every wrong class with the lower bound of true class evidence to certify the robustness. We only need to re-evaluate these bounds in the context of patch size overestimation. Let $\mathcal{W}$ be the set of all possible malicious windows and $\mathcal{V}$ be the set of all possible detected windows whose sizes are larger than malicious windows. We can have the following generalized Lemma.

\begin{lemma}\label{lemma2}
Given a malicious window $\mathbf{w} \in \mathcal{W}$, a class $\bar{y} \in \mathcal{Y}$, and the set of all possible detected windows $\mathcal{V}$, the clipped and masked class evidence of class $\bar{y}$ (i.e., $s_{\bar{y}}$) can be no larger than $\textsc{Sum}(\hat{\mathbf{u}}_{\bar{y}}\odot (\mathbf{1}-\mathbf{v}^*_\mathbf{w}))/(1-T)$, where $\mathbf{v}^*_\mathbf{w}=\arg\max_{\mathbf{v}\in\mathcal{V}_\mathbf{w}}\textsc{Sum}(\hat{\mathbf{u}}_{\bar{y}}\odot \mathbf{v})$ and $\mathcal{V}_\mathbf{w}=\{\mathbf{v}\in\mathcal{V}|\textsc{Sum}(\mathbf{w}\odot\mathbf{v}) = \textsc{Sum}(\mathbf{w})\}$.
\end{lemma}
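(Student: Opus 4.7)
The plan is to generalize the four-case analysis used in the proof of Lemma~\ref{lemma} by replacing the role of $\mathbf{w}$ (the malicious window itself, which is the only admissible ``detectable'' window when patch and mask sizes coincide) with $\mathbf{v}^*_\mathbf{w}$, the maximum-evidence super-window of $\mathbf{w}$ drawn from $\mathcal{V}_\mathbf{w}$. The key structural observation that makes the argument go through is that, under over-estimation, every mask window $\mathbf{v}\in\mathcal{V}_\mathbf{w}$ contains $\mathbf{w}$ entirely, so the detector, which greedily picks the window with maximum clipped class evidence, always has the option to choose some $\mathbf{v}\in\mathcal{V}_\mathbf{w}$ that swallows the full adversarial contribution $e$ together with some benign evidence. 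This option puts a lower bound on what the detector actually removes.

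First, I would fix an arbitrary malicious window $\mathbf{w}\in\mathcal{W}$ and a class $\bar{y}$, and introduce the same decomposition as in Lemma~\ref{lemma}: let $t=\textsc{Sum}(\hat{\mathbf{u}}_{\bar{y}}\odot(\mathbf{1}-\mathbf{w}))$ denote the benign class evidence outside $\mathbf{w}$ (uncontrollable by the adversary), let $e$ denote the total clipped evidence inside $\mathbf{w}$ (adversary-controlled, and nonnegative because $c_l=0$), and let $e'=\textsc{Sum}(\hat{\mathbf{u}}_{\bar{y}}\odot\mathbf{w}^*_{\bar{y}})$ denote the evidence inside whatever window $\mathbf{w}^*_{\bar{y}}\in\mathcal{V}$ the \textsc{Detect} subroutine actually selects, so that the masked class evidence satisfies $s_{\bar{y}}=t+e-e'$. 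I would then split into four cases exactly as before, but parametrized by how $\mathbf{w}^*_{\bar{y}}$ relates to $\mathbf{w}$: (I) $\mathbf{w}^*_{\bar{y}}\supseteq\mathbf{w}$, (II) $\mathbf{w}^*_{\bar{y}}\cap\mathbf{w}=\emptyset$, (III) $\mathbf{w}^*_{\bar{y}}$ partially overlaps $\mathbf{w}$, and (IV) no window triggers the threshold.

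For Cases I--III, the argument is a two-line reduction: because $\mathbf{v}^*_\mathbf{w}\in\mathcal{V}_\mathbf{w}$ is a feasible candidate for the detector, optimality of $\mathbf{w}^*_{\bar{y}}$ forces $\textsc{Sum}(\hat{\mathbf{u}}_{\bar{y}}\odot\mathbf{w}^*_{\bar{y}})\geq \textsc{Sum}(\hat{\mathbf{u}}_{\bar{y}}\odot\mathbf{v}^*_\mathbf{w})=e+\textsc{Sum}(\hat{\mathbf{u}}_{\bar{y}}\odot(\mathbf{v}^*_\mathbf{w}-\mathbf{w}))$, since $\mathbf{v}^*_\mathbf{w}$ fully contains $\mathbf{w}$ and hence all of $e$. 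Substituting into $s_{\bar{y}}=t+e-e'$ gives $s_{\bar{y}}\leq t-\textsc{Sum}(\hat{\mathbf{u}}_{\bar{y}}\odot(\mathbf{v}^*_\mathbf{w}-\mathbf{w}))=\textsc{Sum}(\hat{\mathbf{u}}_{\bar{y}}\odot(\mathbf{1}-\mathbf{v}^*_\mathbf{w}))$, which already meets the claimed bound because $T\in[0,1)$. Case IV is the subtle one and will be the main obstacle: one must convert the ``no detection'' condition $\textsc{Sum}(\hat{\mathbf{u}}_{\bar{y}}\odot\mathbf{v})\leq T(t+e)$, applied to the particular window $\mathbf{v}^*_\mathbf{w}$, into an upper bound on $e$, then plug into $s_{\bar{y}}=t+e$. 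Writing $q=\textsc{Sum}(\hat{\mathbf{u}}_{\bar{y}}\odot(\mathbf{v}^*_\mathbf{w}-\mathbf{w}))$, the inequality becomes $e+q\leq T(t+e)$, which rearranges to $e(1-T)\leq Tt-q$, hence $s_{\bar{y}}=t+e\leq (t-q)/(1-T)=\textsc{Sum}(\hat{\mathbf{u}}_{\bar{y}}\odot(\mathbf{1}-\mathbf{v}^*_\mathbf{w}))/(1-T)$.

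The genuinely delicate point, and the one I would double-check carefully, is that $\mathbf{v}^*_\mathbf{w}$ is well-defined and nonempty: this requires that for every malicious window $\mathbf{w}\in\mathcal{W}$ some mask window in $\mathcal{V}$ contains it, which holds by the assumption that the defender over-estimates (i.e., mask sizes $\geq$ patch-derived window sizes computed via Equation~\ref{eq-receptive}). I would also make sure the clipping assumption $c_l=0$ is used precisely where needed (to ensure $q\geq 0$ in Case IV, so that the inequality $e(1-T)\leq Tt-q$ actually produces a finite, meaningful bound), and that the $T<1$ condition prevents division by zero. Combining the four bounds yields the stated upper bound $\textsc{Sum}(\hat{\mathbf{u}}_{\bar{y}}\odot(\mathbf{1}-\mathbf{v}^*_\mathbf{w}))/(1-T)$, completing the proof and recovering Lemma~\ref{lemma} as the special case $\mathcal{V}=\mathcal{W}$ in which $\mathbf{v}^*_\mathbf{w}=\mathbf{w}$.
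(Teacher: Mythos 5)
Your proposal is correct and follows essentially the same argument as the paper's proof: your $q$ is exactly the paper's $k=\textsc{Sum}(\hat{\mathbf{u}}_{\bar{y}}\odot\mathbf{v}^*_\mathbf{w}\odot(\mathbf{1}-\mathbf{w}))$, and your four-case analysis with the detector-optimality constraint $e'\geq e+q$ and the Case~IV threshold inequality matches the paper's derivation of the bound $(t-k)/(1-T)$ step for step. Your consolidation of Cases I--III into a single optimality argument (and your explicit check that $\mathcal{V}_\mathbf{w}$ is nonempty) is a minor streamlining rather than a different route.
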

In this lemma, $\mathcal{V}_\mathbf{w}$ is the set of possible mask windows that cover the whole malicious window $\mathbf{w}$, and $\mathbf{v}^*_\mathbf{w}$ is the mask window in $\mathcal{V}_\mathbf{w}$ with the largest class evidence. This bound reduces to the bound of Lemma~\ref{lemma} when the sizes of malicious window and mask window (i.e., the output of subprocedure \textsc{Detect}) are the same: $\mathcal{V}_\mathbf{w}=\{\mathbf{w}\}$ and thus $\mathbf{v}^*_\mathbf{w}=\mathbf{w}$. The proof of Lemma~\ref{lemma2} is provided below, which is in the same spirit as that of Lemma~\ref{lemma}.

\renewcommand*{\proofname}{Proof of Lemma~\ref{lemma2}}

\begin{proof}
We use $\mathcal{W}$ to denote the set of malicious windows and $\mathcal{V}$ for the set of mask windows. Following the same notations used in the proof of Lemma~\ref{lemma}, we let $e$ be the amount of class evidence within the malicious window $\mathbf{w}$, $t=\textsc{Sum}(\hat{\mathbf{u}}_{\bar{y}}\odot (\mathbf{1}-\mathbf{w}))$ be the class evidence outside $\mathbf{w}$, $e^\prime$ be the class evidence within the detected window $\mathbf{v}^*_{\bar{y}}$. We will have the clipped and masked class evidence $s_{\bar{y}}=t+e-e^\prime$. Next, let $\mathcal{V}_\mathbf{w}$ be the set of possible detected windows that cover the entire malicious window $\mathbf{w}$, and $\mathbf{v}^*_\mathbf{w}$ be the detected window in $\mathcal{V}_\mathbf{w}$ with the largest class evidence. We use $k=\textsc{Sum}(\hat{\mathbf{u}}_{\bar{y}}\odot\mathbf{v}^*_\mathbf{w}\odot(\mathbf{1}-\mathbf{w}))$ to denote the class evidence inside $\mathbf{v}^*_\mathbf{w}$ but outside $\mathbf{w}$. With these notations, we are able to determine the upper bound by considering four possible cases of the detected window $\mathbf{v}^*_{\bar{y}}$.
\begin{enumerate}
    \item \textit{Case I: the malicious window is perfectly detected.} In this case, we have $\mathbf{v}^*_\mathbf{w}=\mathbf{v}^*_{\bar{y}}$ and thus $e^\prime = e+k$. The class evidence $s_{\bar{y}}=t+e-e^\prime = t-k$. 
    \item \textit{Case II: a benign window is incorrectly detected.} In this case, we have $e^\prime=\textsc{Sum}(\hat{\mathbf{u}}_{\bar{y}}\odot\mathbf{v}^*_{\bar{y}})$. The adversary has the constraint that $e+k\leq e^\prime$; otherwise, the window $\mathbf{v}^*_\mathbf{w}$ instead of $\mathbf{v}^*_{\bar{y}}$ will be detected. Therefore, we have $s_{\bar{y}}=t+e-e^\prime \leq t-k$.
    \item \textit{Case III: the malicious window is partially detected.} Let $\mathbf{r}_1 = \mathbf{v}^*_{\bar{y}}\odot (\mathbf{1}-\mathbf{w})$ be the detected benign region, $\mathbf{r}_2=\mathbf{v}^*_{\bar{y}}\odot \mathbf{w}$ be the detected malicious region, and $\mathbf{r}_3=(\mathbf{1}-\mathbf{v}^*_{\bar{y}})\odot \mathbf{w}$ be the undetected malicious region. Let $q_1,q_2,q_3$ be the class evidence within region $\mathbf{r}_1,\mathbf{r}_2,\mathbf{r}_3$, respectively. We have $e=q_2+q_3$ and $e^\prime=q_1+q_2$. Similar to \textit{Case II}, the adversary has the constraint that $e+k \leq e^\prime$, or $q_3 \leq q_1$; otherwise, $\mathbf{v}^*_\mathbf{w}$ instead of $\mathbf{w}^*_{\bar{y}}$ will be detected. Therefore, we have $s_{\bar{y}}=t+e-e^\prime \leq t-k$.
    \item \textit{Case IV: no suspicious window detected.} This case happens when the largest sum within every possible window does not exceed the detection threshold. We have $(e+k)/{(e+t)}\leq T$, which yields $e\leq (tT-k)/(1-T)$. We also have $e^\prime=0$ since no mask is applied. Therefore, we have the class evidence to satisfy $s_{\bar{y}} = t+e \leq (t-k)/(1-T)$, where $T\in[0,1]$.
\end{enumerate}
Combining the above four cases, we have the upper bound of the target class evidence to be $(t-k)/(1-T)=\textsc{Sum}(\hat{\mathbf{u}}_{\bar{y}}\odot (\mathbf{1}-\mathbf{v}^*_\mathbf{w}))/(1-T)$.

\end{proof}

With Lemma~\ref{lemma2}, we can obtain the upper bound of the wrong class evidence in Algorithm~\ref{alg-provable-masking}. In order to determine the lower bound of true class evidence (Line 7-9), we only need to use a larger mask for the \textsc{Detect} sub-procedure, i.e., modifying Line 8 of Algorithm~\ref{alg-provable-masking} to $\textsc{Detect}(\hat{\mathbf{u}}_y\odot(\mathbf{1}-\mathbf{w}),\mathcal{V},T)$. Given the new bounds for the true class and wrong classes, we can run Algorithm~\ref{alg-provable-masking} to obtain tighter results for the provable accuracy for larger mask window sizes, as shown in Table~\ref{tab-mismatch}.
\begin{table}[t]
    \centering
        \caption{Top-k accuracies of Mask-BN-17 on ImageNet}
    \resizebox{\linewidth}{!}{\scriptsize
    \begin{tabular}{c|c|c|c|c|c|c}
    \toprule
          Patch size &  \multicolumn{2}{c|}{1\% pixels} &\multicolumn{2}{c|}{2\% pixels} & \multicolumn{2}{c}{3\% pixels}  \\
         \midrule
     Accuracy    & clean &robust& clean &robust& clean &robust\\
         \midrule
     Top-1 & 55.1\%&32.2\% &54.6\%&26.0\% & 54.1\% & 19.7\% \\
     Top-2 &65.9\%&48.3\%&65.5\% &43.8\% &64.9\%& 38.2\%\\
     Top-3 &71.3\% & 52.2\% & 70.8\% & 48.7\% & 70.2\%& 44.1\%\\
     Top-4&74.6\% & 53.9\% & 74.2\% & 51.3\% & 73.7\% & 47.4\%\\
     Top-5& 77.0\% & 54.8\% & 76.6\% & 52.9\% & 76.2\% & 49.6\%\\
     \bottomrule
    \end{tabular}}
    \label{tab-topk}
\end{table}
\begin{table}[t]
    \centering
        \caption{Performance of \framework on additional datasets} 
   \resizebox{0.8\linewidth}{!} { \scriptsize
    \begin{tabular}{c|c|c|c|c}
    \toprule
          Dataset &  \multicolumn{2}{c|}{Cats-vs-dogs~\cite{dc}} &\multicolumn{2}{c}{Flowers~\cite{tfflowers}}   \\
         \midrule
     Accuracy    & clean &robust& clean &robust \\
         \midrule
    ResNet-50 &98.4\% & -- & 94.5\%& --\\
     Mask-BN &97.5\% & 91.4\%&91.4\% &83.5\%\\
     Mask-DS &94.8\% &80.2\%&90.5\%&75.6\%\\

     \bottomrule
    \end{tabular}}
 \label{tab-more-ds}
\end{table}

\section{Additional Top-k Analysis}\label{apx-topk}

In this section, we show how to calculate top-k provable robust accuracy and provide additional top-k results for the 1000-class ImageNet dataset.

\noindent \textbf{Top-k provable robustness.} In Algorithm~\ref{alg-provable-masking}, we compare the maximum wrong class evidence $\max_{y^\prime\in\mathcal{Y}^\prime}(\overline{s}_{y^\prime})$ with the lower bound of true class evidence $\underbar{s}_y$ to determine the feasibility of an attack. To determine the top-k provable robustness, we first create a set $\mathcal{S} \gets \{y^\prime\in\mathcal{Y}^\prime | \overline{s}_{y^\prime}>\underbar{s}_y\}$ for wrong classes whose evidence is larger than the lower bound of the true class evidence $\underbar{s}_y$. Next, if the set size $|\mathcal{S}|$ is larger than $k-1$, we assume that a top-k attack is possible; otherwise, the image is robust.

\noindent \textbf{Additional results for ImageNet.} We report the top-k clean accuracy and provable robust accuracy in Table~\ref{tab-topk}. Notably, Mask-BN achieves a 77.0\% clean and 54.8\% provable robust top-5 accuracy against a 1\% pixel patch for the extremely challenging 1000-class classification task.

\section{Defense Performance for Additional Datasets}\label{apx-more-ds}

In this section, we report our provable defense performance for two additional image classification datasets to demonstrate the broad applicability of \framework.

\noindent\textbf{Cats-vs-dogs~\cite{dc}.} The Cat-vs-dog dataset~\cite{dc} contains 25000 images of cats and dogs and is used for binary image classification. We randomly split the dataset and take 80\% images for training and the remaining 20\% for validation. We resize and crop all images 224$\times$224 model inputs, and use the same set of hyper-parameters used for the ImageNette dataset. We report the defense performance for robust masking with BagNet-17 (Mask-BN) and DS-25-ResNet-50 (Mask-DS) against a 2\% pixel patch in Table~\ref{tab-more-ds}. As shown in the table, \framework performs well on the Cats-vs-dogs dataset: the clean accuracy of Mask-BN is only 0.9\% lower compared with ResNet-50, and its provable robust accuracy is as high as 91.4\%.

\noindent\textbf{Flowers~\cite{tfflowers}.} The Flowers dataset~\cite{tfflowers} has 3670 flower images from 5 different categories: \texttt{daisy}, \texttt{dandelion}, \texttt{roses}, \texttt{sunflowers}, \texttt{tulips}. We take a random subset of 80\% images as the training set and the remaining images as the validation set. We use the same hyper-parameters as ImageNette dataset to train the Flowers model. We report the results for a 2\% pixel patch in Table~\ref{tab-more-ds}. \framework also achieves similarly good robustness on the Flowers dataset: Mask-BN achieves a 91.4\% clean accuracy and 83.5\% provable robust accuracy for the 5-class classification task.

\begin{table}[t]
    \centering
        \caption{Invariance of BagNet-17 predictions to feature masking (CIFAR-10)}
  \resizebox{\linewidth}{!}{ \scriptsize \begin{tabular}{c|c|c|c|c|c}
    \toprule
    Window size& 0$\times$0&2$\times$2&4$\times$4 & 6$\times$6 & 8$\times$8 \\
    \midrule
    Masked accuracy &85.4\% & 85.3\% &  85.1\% & 84.7\%&  83.8\% \\
    \% images   & 14.6\% & 18.8\% &22.6\%&27.0\% & 29.0\%\\
    \% windows per image& 0\% & 0.4\% & 1.1\% &2.1\%& 3.7\% \\
    \bottomrule
    \end{tabular}}
    \label{tab-feature-mask-cifar}
\end{table}

\begin{table}[t]
    \centering
        \caption{Effect of logits clipping values on vanilla models (CIFAR-10)}\label{tab-clipping-cifar}
    \resizebox{\linewidth}{!}{\scriptsize \begin{tabular}{c|c|c|c|c|c}
    \toprule
    $(c_l,c_h)$& $(-\infty,\infty)$&$(0,\infty)$&$(0,50)$ & $(0,15)$ & $(0,5)$ \\
    \midrule
    ResNet-50 & 97.0\% & 97.0\% & 97.0\% & 96.7\% & 93.9\%\\
    BagNet-33 &91.3\% & 90.8\%& 89.3\% & 87.1\% & 84.7\%\\
    BagNet-17 & 85.4\% & 85.2\%& 83.2\% & 73.3\% & 64.5\%\\
    BagNet-9 &76.4\% & 75.8\% & 75.1\% & 66.3\% & 57.1\%\\

    \bottomrule
    \end{tabular}}

\end{table}

\section{Additional results for CIFAR-10}\label{apx-cifar}

In Section~\ref{sec-eval-detail}, we provide a detailed analysis of our defense on ImageNette. In this subsection, we provide additional results for CIFAR-10. We will report defense performance against a 2$\times$2 (0.4\% pixels), 5$\times$5 (2.4\%), and 8$\times$8 (6.2\%) adversarial patch. We re-scale the 32$\times$32 images to 192$\times$192 with bicubic interpolation and use BagNet-17 for most of the evaluation. Note that the patch size is also re-scaled proportionally.

\noindent\textbf{Prediction invariance of vanilla models to feature masking.} For BagNet-17 on a re-scaled 192$\times$192 image, we will have 22$\times$22 local features. The results for prediction invariance to feature masking are reported in Table~\ref{tab-feature-mask-cifar}. We can see that the model predictions are generally invariant to small masks. However, we do find that BagNet is more sensitive to feature masking on CIFAR-10 than on ImageNet (Table~\ref{tab-feature-mask}), which also explains its relatively weaker provable robustness performance on CIFAR-10.

\noindent\textbf{Effect of clipping on vanilla models.} We report the effect of clipping in Table~\ref{tab-clipping-cifar}. We also find models trained on CIFAR-10 are more sensitive to clipping than those trained on ImageNet (Table~\ref{tab-clipping}). Our explanation is that most of the small pixel patches of tiny images like CIFAR-10 contain too little visual information and are even unrecognizable for a human. Therefore, most local predictions are incorrect, and the correct local prediction has to use large logits values to dominate the global prediction, which leads to the sensitivity to clipping.

\noindent\textbf{Effect of receptive field sizes on defended models.} We report clean accuracy and provable robust accuracy of our defense for BagNet-33, BagNet-17, and BagNet-9, which have a receptive field of 33$\times$33, 17$\times$17, and 9$\times$9, respectively, against different patch sizes in Table~\ref{tab-masking-provable-cifar}. The observation is similar to results for ImageNette in Table~\ref{tab-masking-provable}: a model with a larger receptive field has better clean accuracy but a larger receptive field results in a larger fraction of corrupted features and thus a larger gap between clean accuracy and provable robust accuracy. 

\begin{table}[t]
    \centering
        \caption{Effect of receptive field sizes on Mask-BN (CIFAR-10)}
    \resizebox{\linewidth}{!}{\scriptsize
    \begin{tabular}{c|c|c|c|c|c|c}
    \toprule
          Patch size &  \multicolumn{2}{c|}{0.4\% pixels} &\multicolumn{2}{c|}{2.4\% pixels} & \multicolumn{2}{c}{6.2\% pixels}  \\
         \midrule
     Accuracy    & clean &robust& clean &robust& clean &robust\\
         \midrule
     Mask-BN-33 & 89.8\% &61.0\% & 89.0\% & 42.7\% & 87.9\% &20.6\%\\
     Mask-BN-17 & 84.5\% &63.8\%&83.9\%& {47.3\%} & 83.3\%& 26.8\%\\
     Mask-BN-9 & 75.7\%  & 62.4\% &75.6\% & 51.7\% &75.0\% & 37.5\%\\
     \bottomrule
    \end{tabular}}
    \label{tab-masking-provable-cifar}
\end{table}

\noindent\textbf{Effect of large masks on defended models.} We report the clean and provable robust accuracy of BagNet-17 when using a conservatively large mask in Table~\ref{tab-mismatch-cifar}. We can have a similar observation as that of ImageNette in Table~\ref{tab-mismatch}.

\begin{table}[t]
    \centering
        \caption{Effect of large masks on Mask-BN-17 (CIFAR-10) }
   \resizebox{\linewidth}{!}  {
    \begin{tabular}{c|c|c|c|c}
    \toprule
     \backslashbox{mask}{patch}     & clean &  0.4\% pixels &2.4\% pixels & 6.2\% pixels  \\
         \midrule
    0.4\% pixels&84.5\% & {63.8\% }& -- & -- \\ 
     2.4\% pixels & 83.9\% &58.9\% &47.3\%& -- \\
     4.4\% pixels &83.5\% &56.2\% & 44.7\%& --\\
     6.2\% pixels & 83.3\%&52.8\% &41.7\%& 26.80\% \\
     \bottomrule
    \end{tabular}}
    \label{tab-mismatch-cifar}
\end{table}

\noindent\textbf{Effect of the detection threshold on defended models.} We study the model performance of BagNet-17 against a 2.4\% pixel patch as we change the detection threshold $T$ from $0.0$ to $1.0$. The results are reported in Table~\ref{tab-masking-thres-cifar} and are similar to those for ImageNette in Table~\ref{tab-masking-thres}.

\begin{table}[t]
    \centering
        \caption{Effect of different detection thresholds on Mask-BN-17 (CIFAR-10)}   
    \resizebox{\linewidth}{!}{\scriptsize \begin{tabular}{c|c|c|c}
    \toprule
          & Clean accuracy & Provable accuracy & Detection FP  \\
         \midrule
     T-0.0 & 84.5\% & 47.3\%& 100\%    \\
     T-0.2 & 80.5\%  &14.8\%  & 31.2\%   \\
     T-0.4 & 85.1\%  &14.3\% & 0.08\%  \\
     T-0.6 &  85.2\%   &0.7\% & 0.02\% \\
     T-0.8 & 85.2\% & 0\%&  0\%   \\
     T-1.0 & 85.2\% & 0\% &0\%   \\
     \bottomrule
    \end{tabular}}
 \label{tab-masking-thres-cifar}
\end{table}

\noindent\textbf{Effect of using different feature types for defended models.} The results for BagNet-17 with different features are reported in Table~\ref{tab-feature-bn-cifar}. Using logits as the feature type has a much better performance than confidence and prediction in terms of clean accuracy and provable accuracy. Results for Mask-DS are shown in Table~\ref{tab-feature-ds-cifar}. Mask-DS works better when we use prediction or confidence as feature types due to its different training objectives.

\begin{table}[t]
    \centering
        \caption{Effect of feature types on Mask-BN (CIFAR-10)}
    \resizebox{\linewidth}{!}{\scriptsize
    \begin{tabular}{c|c|c|c|c|c|c}
    \toprule
          Patch size &  \multicolumn{2}{c|}{0.4\% pixels} &\multicolumn{2}{c|}{2.4\% pixels} & \multicolumn{2}{c}{6.2\% pixels}  \\
         \midrule
      Accuracy   & clean &robust& clean &robust& clean &robust\\
         \midrule
     Logits&  84.5\% &63.8\%&83.9\%& {47.3\%} & 83.3\%& 26.8\%\\
     Confidence&69.6\% & 50.4\% & 69.6\% & 38.0\% & 69.1\% & 24.5\%\\
     Prediction & 65.9\% & 49.6\% & 66.5\% & 34.6\% & 66.4\% & 21.0\%\\

     \bottomrule
    \end{tabular}}
    \label{tab-feature-bn-cifar}

\end{table}

\begin{table}[b]
    \centering
        \caption{Effect of feature types on Mask-DS (CIFAR-10)}
    \resizebox{\linewidth}{!}{\scriptsize
    \begin{tabular}{c|c|c|c|c|c|c}
    \toprule
          Patch size &  \multicolumn{2}{c|}{0.4\% pixels} &\multicolumn{2}{c|}{2.4\% pixels} & \multicolumn{2}{c}{6.2\% pixels}  \\
         \midrule
     Accuracy    & clean &robust& clean &robust& clean &robust\\
         \midrule
     Logits&85.0\%& 60.6\%  & 84.3\% & 39.0\% & 83.4\% & 13.8\%\\
     Confidence & 84.7\% & 69.1\% &  84.6\% & 57.7\% & 84.5\% & 41.8\% \\
     Prediction & 83.9\% & 69.2\% & 83.8\% & 56.8\% & 84.0\%& 41.7\%\\
     \bottomrule
    \end{tabular}}
    \label{tab-feature-ds-cifar}
\end{table}



\section{Detailed Comparison Between Robust Masking and De-randomized Smoothing}\label{apx-generalization}

In Section~\ref{sec-generalization-cbn-ds}, we discussed that \framework is a generalization of DS-ResNet~\cite{levine2020randomized}, and our robust masking outperforms DS-ResNet due to the utilization of spatial information. In this section, we provide more details supporting this claim.

\noindent\textbf{Leveraging spatial information.} We will demonstrate the effect of spatial information in the certification process with a toy example. Assume the original local prediction vector for an image is [\texttt{C},\texttt{C},\texttt{W},\texttt{C},\texttt{C},\texttt{W},\texttt{C},\texttt{C},\texttt{W},\texttt{C},\texttt{C}], where \texttt{C} refers to a correct prediction and \texttt{W} refers to a wrong prediction, and the adversary can additionally corrupt $3$ local predictions. In this example, there are $8$ correct predictions and $3$ wrong predictions in the original prediction vector. Since their difference ($5$) is smaller than two times the upper bound of the number of corrupted predictions ($2\cdot3=6$), DS cannot certify this image. However, the adversary can only corrupt $3$ \emph{consecutive} local predictions, and the optimal attack can result in $6$ correct predictions and 5 wrong predictions, which means this image should be certified. On the other hand, our robust masking utilizes the spatial information and can certify this image, leading to a better provable robustness.

\section{Additional Discussion on Different Patch Shapes}\label{apx-patch-shape}

In our evaluation, we only report our defense performance against a square patch. In this section, we report additional evaluation results for different patch shapes. We note that Equation~\ref{eq-receptive} applies to both x-axis and y-axis of the image. Therefore, given a patch shape, we only need to compute the mask window size for each axis to achieve robustness. In Table~\ref{tab-shape}, we report the performance of Mask-BN-17 on different rectangular/square patches on ImageNette and ImageNet. As shown in the table, our defense for rectangular patches works as well as that for square patches; the provable robust accuracy is largely correlated with the number of patch pixels instead of the patch shape. However, we note that our robust masking defense requires a prior estimation of the patch shapes. We leave how to design a parameter-free secure aggregation as future work.

\begin{table}[]
    \centering
     \resizebox{0.9\linewidth}{!}
     {\scriptsize
    \begin{tabular}{c|c|c|c|c}
    \toprule
     Dataset    & \multicolumn{2}{c|}{ImageNette}&\multicolumn{2}{c}{ImageNet} \\
         \midrule
         Accuracy& clean & robust &clean &robust\\
         \midrule
    16$\times$144 (2304px) &95.0\%&78.0\%&54.1\%&11.7\%\\
    24$\times$88 (2112px)& 94.8\% & 79.8\%&53.6\%&14.2\%\\
32$\times$72 (2304px) &94.6\% & 78.9\%&53.4\%&13.4\%\\
40$\times$56 (2240px) &94.7\% & 79.6\%&53.5\%&14.4\%\\
\underline{48$\times$48 (2304px)} &\underline{94.6\%}&\underline{79.4\%}&\underline{53.5\%}&\underline{14.2\%}\\
56$\times$40 (2240px) &94.6\%&80.1\%&53.5\%&14.7\%\\
72$\times$32 (2304px)&94.8\%&79.7\%&53.6\%&14.2\%\\
88$\times$24 (2112px)& 94.8\% & 80.7\%&53.9\%&15.3\%\\
144$\times$16 (2304px) &94.9\% & 80.4\%&54.2\%&13.7\%\\
        \bottomrule
    \end{tabular}}
    \caption{Mask-BN-17 performance on different patch shapes}
    \label{tab-shape}
\end{table}


\section{Additional Discussion on Multiple Patches}\label{apx-multiple-patch}

In this paper, we focus on the threat of the adversary arbitrarily corrupting \emph{one} contiguous region. In this section, we discuss how \framework can deal with multiple adversarial patches.

\noindent\textbf{Merge multiple patches into one large patch.} The most straightforward way to approach multiple patches is to consider a larger contiguous region that contains all patches. The analysis in the paper can be directly applied. However, when the multiple patches are far away from each other, the merged single region can be too large to have decent model robustness. When this is the case, we have the following alternatives.\footnote{We note that when patches are far away from each other, the problem is closer to the global adversarial example with a $L_0$ constraint, which is orthogonal to our work.}

\noindent\textbf{Mask individual local features.} Our robust masking presented in Section~\ref{sec-masking} masks the feature \emph{window} with the highest class evidence. As an alternative, we can mask $\alpha$ \emph{individual} local features with top-$\alpha$ highest class evidence. Such individual feature masking is agnostic to the number of patches, and we can easily re-prove Lemma~\ref{lemma} to have the same upper bound of wrong class evidence. However, the lower bound of true class in this masking mechanism is reduced compared with window masking, and this will lead to a drop in provable robust accuracy. 

\noindent \textbf{Use alternative secure aggregation.} As discussed in Section~\ref{sec-future}, a promising direction of future work is to explore parameter-free secure aggregation mechanisms. We note that we have already seen concrete examples of alternative aggregation that can deal with multiple patches in Section~\ref{sec-generalization-cbn-ds}, where we discuss how to reduce \framework to CBN and DS.

\begin{figure}[t]
    \centering
    \includegraphics[width=\linewidth]{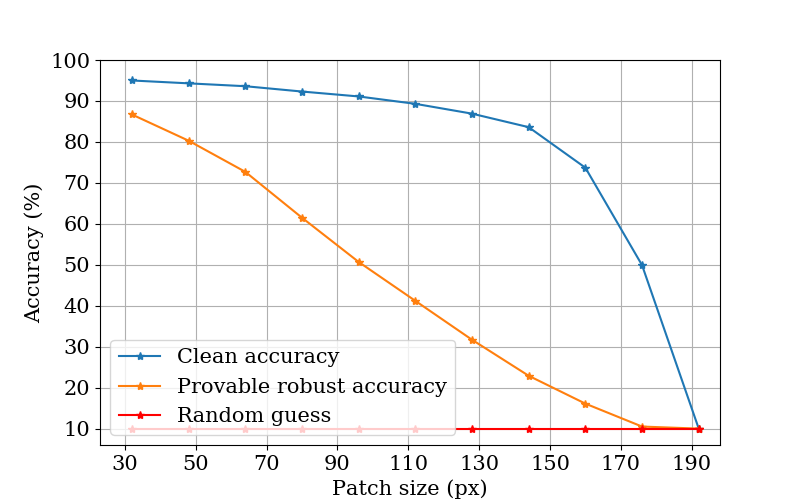}
    \caption{Performance of Mask-BN-17 on ImageNette against various patch sizes.}
    \label{fig-large-patch}
\end{figure}

\begin{figure}[t]
    \centering
    \includegraphics[width=\linewidth]{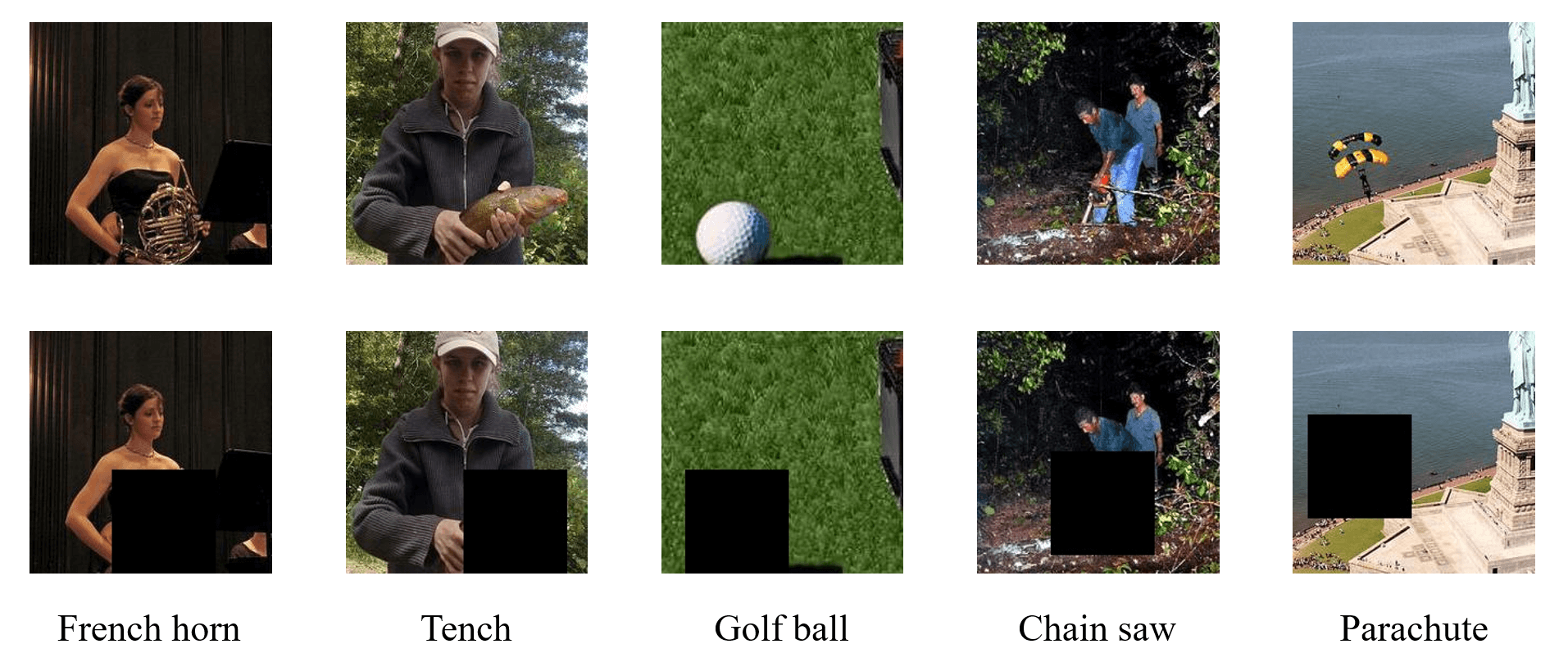}
    \vspace{-1.5em}
    \caption{Visualization of large occlusion with a 96$\times$96 pixel block on the 224$\times$224 image.}
    \label{fig-large-patch-visu}
\end{figure}

\section{Additional Discussion on the Limits of \framework}\label{apx-large-patch}

In Section~\ref{sec-evaluation}, we evaluate our defense against 1-3\% pixel patches. In this section, we take Mask-BN-17 on ImageNette for a case study to analyze the defense performance when facing a much larger patch. We report the performance of Mask-BN-17 against various patch sizes in Figure~\ref{fig-large-patch}. Note that the image is in the shape of 224$\times$224; a 32$\times$32 square patch takes up 2\% pixels. As shown in the figure, the clean accuracy of Mask-BN-17 drops slowly as the patch size increases. When the patch size is as large as 192$\times$192 pixels, the patch will appear in the receptive field of all local features of BagNet-17 and our defense reduces to a random guess (10\% accuracy for the 10-class classification task). Similarly, the provable robust accuracy drops as the patch becomes larger. We note that this drop also results from the limitation of classification problem itself when the patch is large. In Figure~\ref{fig-large-patch-visu}, we visualize five images and occlude them with a 96$\times$96 pixel block. As shown in the figure, a large pixel block covers the entire salient object and makes the classification almost impossible (recall that our threat model allows the adversary to put a patch at any location of the image). Notably, our defense still achieves a 91.1\% clean accuracy and 50.7\% provable robust accuracy against this large 96$\times$96 patch. We believe this analysis further demonstrates the strength of our defense.

\end{document}